\documentclass[letterpaper]{article} 
\usepackage[preprint]{aaai2027}  
\usepackage[hyphens]{url}  
\usepackage{graphicx} 
\usepackage{natbib}  
\usepackage{caption} 
\usepackage[vlined,ruled,linesnumbered]{algorithm2e}
\usepackage{amssymb}
\usepackage{pifont}
\usepackage{subcaption}

\usepackage{newfloat}
\usepackage{listings}
\DeclareCaptionStyle{ruled}{labelfont=normalfont,labelsep=colon,strut=off} 
\DeclareFloatingEnvironment[fileext=lst,placement={tb},name=Listing]{listing}
\lstdefinelanguage{PDDL}{
  sensitive=false,
  morecomment=[l]{;},
  morekeywords={
    define,domain,problem,requirements,types,constants,predicates,
    functions,action,durative-action,parameters,precondition,effect,
    condition,duration,objects,init,goal,metric,
    and,or,not,exists,imply,
    when,forall
  }
}
\usepackage{booktabs}

\usepackage{amsmath}
\usepackage{amssymb} 
\usepackage{amsthm}
\usepackage{mathtools}
\usepackage{pifont} 
\usepackage{dsfont}
\usepackage{multirow}
\usepackage{mathrsfs} 

\newcommand{\mD}{\mathcal{D}}

\newcommand{\mH}{\mathcal{H}}

\newcommand{\mM}{\mathcal{M}}
\newcommand{\mO}{\mathcal{O}}
\newcommand{\mP}{\mathcal{P}}

\newcommand{\execr}{v}
\newcommand{\hmap}{h_\mathrm{map}} 
\newcommand{\lbound}{\hat{L}}
\newcommand{\mLa}{\mathcal{L}_a}

\newcommand{\score}[2]{#1$\pm$#2}

\DeclareMathOperator{\preop}{pre}
\DeclareMathOperator{\effop}{eff}

\DeclareMathOperator{\softmax}{Softmax}

\DeclareMathOperator{\reset}{reset}
\DeclareMathOperator{\getquantified}{get-new-quantified}

\newtheorem{proposition}{Proposition}
\newtheorem{corollary}{Corollary}

\newcommand{\cmark}{\textcolor{green}{\ding{51}}} 
\newcommand{\xmark}{\textcolor{red}{\ding{55}}}  
\definecolor{darkyellow}{rgb}{0.90, 0.75, 0.0}
\newcommand{\warnmark}{\textcolor{darkyellow}{\ding{115}}} 
\title{Learning Action Models with Conditional and Quantified Effects via Uncertainty-Guided Exploration}

\author {
    Jeffrey Jewett\textsuperscript{\rm 1},
    William Solow\textsuperscript{\rm 1},
    Sandhya Saisubramanian\textsuperscript{\rm 1}
}
\affiliations {

    \textsuperscript{\rm 1}Oregon State University\\
    jewettje@oregonstate.edu,
    soloww@oregonstate.edu,
    sandhya.sai@oregonstate.edu
}

\begin{document}

\maketitle

\begin{abstract}
Accurate action models are critical for effective planning. Existing action-model learning methods largely assume simple action representations or become computationally intractable when learning conditional and quantified effects. We present Online Hypothesis-Driven Conditional Action Model Learning (OHCAM), an online approach for learning action models with conditional and quantified effects from limited interactions with the environment. OHCAM maintains a belief over hypothesized action models and actively selects informative actions to reduce uncertainty by maximizing disagreement among competing hypotheses, while being robust to noisy observations. To enable scalability, OHCAM begins with a small set of simple action model hypotheses and expands to more complex conditions only when the current hypotheses become inconsistent with the data. Experiments on six benchmark planning domains demonstrate that OHCAM is sample efficient in learning action models that solve substantially more tasks than baselines, even with observation noise. We validate OHCAM on two tasks using a Kinova Gen3 robot, demonstrating the real-world applicability of our approach.

\end{abstract}

\section{Introduction}
Autonomous agents require accurate action models for reliable high-level task planning. In complex, real-world environments, actions often have context-dependent outcomes that affect nearby objects. Capturing such behavior requires expressive action models with conditional and quantified effects~\citep{maoPDSketchIntegratedPlanning}. While recent works have demonstrated the feasibility of learning conditional action models~\citep{zhuoLearningComplexAction2010, mordochSafeLearningPDDL2024}, scalability and robustness to noise remains a significant challenge, due to the combinatorial space of possible conditional and quantified effects, hindering real world applicability. 

Consider a robot tasked with cleaning plates (Fig.~\ref{fig:overview}). A robot wiping a stained plate with a sponge expects to clean it, but instead makes it worse if the sponge is already dirty. When a soda can is on the plate, the wiping action knocks it off the plate. Correctly modeling these behaviors require learning both \emph{when} (conditional) the effect occurs and \emph{what additional objects} (quantified) could be affected as a result of the action. Many conditional and quantified effects depend on environment-specific interactions that are unknown before deployment and cannot be fully captured in an offline dataset or anticipated by a human designer. Consequently, the agent must identify and refine these effects \emph{online} through interaction. ~\citep{lamannaOnlineLearningAction2021}. This inference task becomes more challenging under noisy observations, where incorrect state estimates obscure the true action dynamics, which is common in the deployment of robots~\citep{barfoot2024state}.

Most existing action model learning methods target learning STRIPS~\citep{fikesLearningExecutingGeneralized1972}, where successful actions apply every learned effect regardless of the context~\citep{yangLearningActionModels2007,ainetoLearningActionModels2019,ainetoComprehensiveFrameworkLearning2022}. Recent approaches for learning conditional action models learn offline and reason over the full hypothesis space, but impose restrictions on the number of learnable conditions to address the combinatorial space of effects~\citep{oates1996learning, zhuoLearningComplexAction2010, mordochSafeLearningPDDL2024}. In contrast, an online learner must continually update its action model from a limited number of noisy interactions while remaining computationally tractable. Achieving this requires three key properties: (1) scalability, to learn efficiently in larger domains without pre-defined limits on the maximum complexity; (2) robustness to noisy observations, since real-world state transitions are often imperfectly observed; and (3) sample efficiency, as real-world executions are time consuming.

\begin{figure*}[t!]
    \centering
    \includegraphics[width=0.95\linewidth]{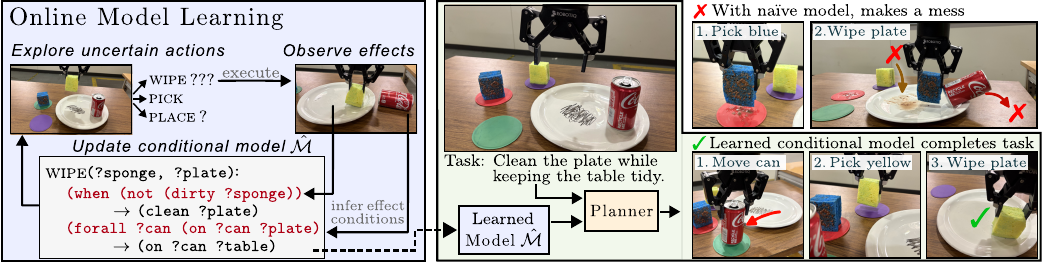}
    \caption{An overview of the online conditional action model learning setting. The agent is given a brief period to interact with the environment before it is evaluated on unseen tasks. By observing when an effect occurs, it learns a model with conditional and quantified effects (left) to then effectively plan to move the can before wiping the plate with the \emph{clean} sponge (right). }
    \label{fig:overview}
    \vspace{-3pt}
\end{figure*}

We present Online Hypothesis-Driven Conditional Action Model Learning (OHCAM) that integrates hypothesis refinement and active exploration for online learning of action models with conditional and quantified effects. OHCAM maintains a posterior (belief) over a compact set of candidate hypotheses using a soft-consistency likelihood that is robust to noisy observations. To infer the maximum a posteriori hypothesis, OHCAM uses a branch-and-bound search that begins with few simple action-model hypotheses and \textit{dynamically expands} to more complex conditions only when supported by the data. This avoids reasoning over the combinatorial hypothesis space, enabling scalability. Further, OHCAM supports sample-efficient exploration by selecting actions that maximize disagreement among plausible hypotheses. 

Evaluation on six benchmark planning domains with conditional effects and quantified variables show that OHCAM achieves substantially higher task completion rates and improved sample efficiency over state-of-the-art action model learning methods. We further validate OHCAM on two tasks with a Kinova Gen3 robot, demonstrating its practical applicability for learning models online in noisy environments.

\section{Related Works}

\paragraph{Action Model Learning (AML)}
The AML problem seeks to infer the preconditions and effects of parameterized actions in symbolic models from observed executions. Most existing AML approaches learn STRIPS action models~\citep{fikesLearningExecutingGeneralized1972}, where the effects are applied unconditionally and are restricted to the objects in the action parameters~\citep{yangLearningActionModels2007, amirLearningPartiallyObservable2008, ainetoLearningActionModels2019, jubaSafeLearningLifted2021a, linToldYouThat2025}. AML algorithms search over logical hypotheses by eliminating candidate preconditions and effects that are inconsistent with observations~\citep{ainetoComprehensiveFrameworkLearning2022}. Noise in real-world observations is a significant limitation of this logical induction as these methods will prune the ground truth formula after a single noisy observation. Several works designed for robustness to noise are restricted to STRIPS effects~\citep{zhuoActionmodelAcquisitionNoisy2013, lamannaActionModelLearning2024}. While effective for many planning domains, these methods cannot capture conditional behaviors or effects on additional objects, limiting their applicability to complex environments.

\paragraph{Models with Conditional Effects and Quantified Variables} Deictic references~\citep{pasulaLearningSymbolicModels2007} implicitly model conditional effects and a limited number of quantified effects with multiple actions.~\citet{chitnisLearningNeuroSymbolicRelational2022} learn such references though clustering, although it learns overly permissive preconditions under noisy observations. LAMP~\citep{zhuoLearningComplexAction2010} explicitly models conditional and quantified formulas but the resulting hypothesis space grows rapidly, limiting LAMP to just one quantified variable. Conditional-SAM~\citep{mordochSafeLearningPDDL2024} learns conditional effects with a pre-specified number of quantified variables, but reasons over exponentially many hypotheses. These offline approaches remain fundamentally limited by the complexity of reasoning over large hypothesis spaces. They are also brittle to noisy observations, where a single inconsistent transition may eliminate the correct hypothesis. Our approach addresses both challenges by maintaining a belief over conditional hypotheses and dynamically expanding only the most promising conditions. 

\begin{table}[t]
    \centering
    \setlength{\tabcolsep}{2pt}
    \renewcommand{\arraystretch}{1.1}
    \resizebox{\columnwidth}{!}{%
    \begin{tabular}{@{}lccccc@{}}
        \toprule
        & Conditional
        & Quantified
        & Noise Robust
        & Online
        & Scalable \\
        \midrule
        Lamanna et al.~(\citeyear{lamannaActionModelLearning2024}) & \xmark & \xmark & \cmark & \xmark & \cmark \\
        \citet{lamannaOnlineLearningAction2021} & \xmark & \xmark & \xmark & \cmark & \xmark \\
        \citet{zhuoLearningComplexAction2010} & \cmark & \cmark & \warnmark & \xmark & \xmark \\
        Pasula et al.~(\citeyear{pasulaLearningSymbolicModels2007}) & \warnmark & \warnmark & \cmark & \xmark & \xmark \\
        \citet{chitnisLearningNeuroSymbolicRelational2022} & \warnmark & \warnmark & \xmark & \xmark & \cmark \\
        \citet{mordochSafeLearningPDDL2024} & \cmark & \cmark & \xmark & \xmark & \xmark \\
        OHCAM (Ours) & \cmark & \cmark & \cmark & \cmark & \cmark \\
        \bottomrule
    \end{tabular}}
    \caption{Comparing action model learning approaches across key capabilities required for use in real-world environments.}
    \label{tab:related-work-features}
\end{table}
\paragraph{Online AML}
Online AML agents select actions to execute and incrementally update the model with the observed outcome~\citep{certickyRealTimeActionModel2014}. As real world data are time consuming to collect, sample efficiency is critical, requiring active exploration of informative actions. Rather than exploring myopically~\citep{rodriguesActiveLearningRelational2012}, a recent approach constructs multi-step plans to reach any state that distinguishes competing STRIPS hypotheses~\citep{lamannaOnlineLearningAction2021}. We adapt this idea to support active learning of conditional effects under noise in observations. Table~\ref{tab:related-work-features} summarizes the key capabilities of the existing action-model learning approaches. 

\section{Problem Formulation}
We consider the problem of learning a symbolic action model with conditional and quantified effects from limited online interactions in an environment. A planning domain $\mM = (\mP, A)$ consists of a set of predicates $\mP$ describing object attributes and relations, and a set of parameterized actions $A$. Action models specify \textit{preconditions} $\mathrm{pre}(a)$ (when the action is applicable) and \textit{effects} $\mathrm{eff}(a)$ (what the action changes in the state). Each action takes a tuple of typed parameters $\mathrm{params}(a) = (?p_1, \dots,  ?p_n)$. A planning task $\tau_\mM = (\mathcal{O},s_0, g) $ consists of a set of objects $\mathcal{O}$, an initial state $s_0$ and a goal condition $g$. An action is grounded by instantiating its parameters with a tuple of unique objects $o \subseteq \mO$, denoted by $a(o)$, where $o_i \neq o_j$ for all $i \neq j$. 

The agent interacts with an environment given the predicates $\mP$ and parameterized action schema, but the preconditions and effects of each action must be learned online. Such scenarios are common in robotics settings where we know what (macro) actions are available but do not know in advance exactly how they affect the environment. 

\begin{figure*}[t]
    \centering
    \includegraphics[width=0.9\linewidth]{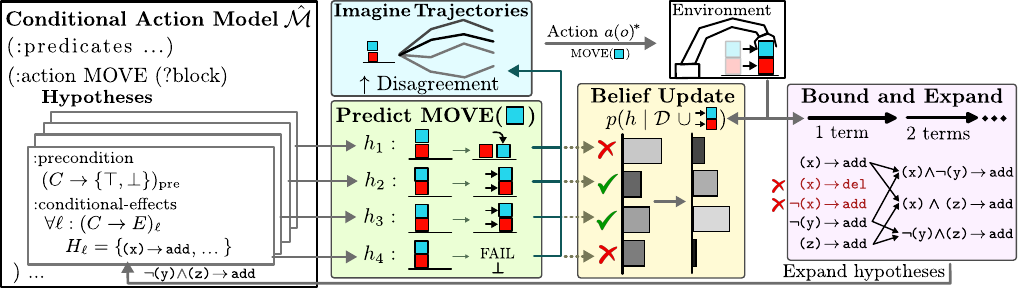}
    \caption{Overview of OHCAM with four hypotheses in a blocks domain: $h_1$ predicts the blue block will move, $h_2$ and $h_3$ predict both blocks will move, and $h_4$ predicts a failure. OHCAM imagines trajectory rollouts under its current belief and executes the action that produces the highest predictive disagreement. Observing that both blocks move, the posterior probability shifts toward $h_2$ and $h_3$, while $h_1$ is reduced, but not eliminated. A more complex hypothesis replaces $h_4$ to maintain diversity.
    }
    \label{figure:ohcam}
\end{figure*}

\paragraph{Conditional and Quantified Effects} To model complex action behaviors, we need an expressive representation of action effects. Unlike STRIPS, we allow actions to have \textit{conditional effects} which depend on the execution context and \textit{quantified effects} which can influence objects beyond the action parameters through \emph{universally quantified variables} (UQVs). A \textit{literal} $\ell(?p_{i_1}, \dots, ?p_{i_m})$ binds action parameters to a predicate $\rho \in \mP$. Let $\mathcal{L}_a$ be the set of all literals bound by $\mathrm{params}(a)$. We define $\mathrm{eff}(a)$ by associating each $\ell \in \mathcal{L}_a$ with a conditional effect $(C \to E)_\ell^a$, where the $C$ is a conjunction of predicate literals and $E$ denotes an add or delete effect on $\ell$. When a grounded action $a(o)$ is executed in state $s$, the effect is applied to the next state $s'$ only if all terms $\ell^c \in C_\ell$ are satisfied: $\ell^c(o) \models s$. 
A quantified effect extends this representation by introducing UQVs that influence objects outside of $\mathrm{params}(a)$ and applied to all objects satisfying the corresponding condition. 

\paragraph{Online Action Model Learning} Given an environment $\Delta$ with states grounded in predicates $\mP$ and a set of actions $a\!\in\!A$ with known parameters $\mathrm{params}(a)$, the objective is to learn an action model $\hat{\mM} = \{\hat{\mathrm{pre}}(a), \hat{\mathrm{eff}}(a)\}_{a\in A}$ where $\hat{\mathrm{eff}}(a)$ may contain conditional and quantified effects. During learning, the agent interacts with the environment by executing actions $a(o)$ and observes $(s, a(o), s', \execr)$ where $s,s'$ denote the states before and after action execution, and $v \in \{ \top, \bot\}$ denotes whether the action succeeded ($\top$) or not ($\bot$). Without an offline dataset, the agent must actively select informative actions that reduce model uncertainty, under a limited budget $B$ and noisy observations. 

\section{Hypothesis-Driven Action Model Learning}
This section outlines our approach, Online Hypothesis-Driven Conditional Action Model Learning (OHCAM), to incrementally learn action models with conditional and quantified effects (Fig.~\ref{figure:ohcam}). OHCAM is made up of two tightly coupled algorithmic components (Alg.~\ref{alg:ohcam}): (i) \emph{dynamic hypothesis expansion} which maintains a belief over plausible action models and expands the hypothesis space only when simpler models become inconsistent with the observed data, thereby avoiding exhaustive reasoning over exponentially many candidates, and (ii) \emph{belief-guided exploration} that enables sample-efficient learning by planning to reach regions that maximally reduce uncertainty over competing hypotheses, while remaining robust to noisy observations. We describe each component in detail below.

\subsection{Dynamic Hypothesis Expansion} 
Let $\mathcal{H}^\mathcal{M}$ be the space of models of the form $\{\hat{\mathrm{pre}}(a), \hat{\mathrm{eff}}(a)\}_{a\in A}$. Learning action models with conditional and quantified effects requires reasoning over an exponentially large space of candidate hypotheses, as each effect may depend on an arbitrary conjunction of predicates. Our objective is to find the hypothesis
in $\mathcal{H}^\mathcal{M}$ that best explains the observed execution data $\mD$. We first discuss this in the offline data setting and then extend it to the online setting.

\paragraph{Estimating Hypothesis Likelihood} With an offline dataset $\mD$, the problem reduces to searching for the maximum a posteriori (MAP) hypothesis $\hmap = \arg\max_h P(h \mid \mathcal{D})$. Directly evaluating this posterior is not scalable as it requires jointly reasoning over all preconditions and conditional effects. Therefore, we factor each model action into a precondition hypothesis set $\mathcal{H}^a_{\mathrm{pre}}$ and a hypothesis set $\mathcal{H}^a_\ell$ for each potential effect literal $\ell\in\mathcal L_a$. The space of hypotheses $\mathcal{H}_\ell^a$ is defined over effect conditions of the form $h = (C_h \to E_h)_\ell^a$, where the \textit{antecedent} $C_h$ is a conjunction of literals from $\mathcal{L}_a \setminus \{ \ell \}$. 
Deterministic action effects apply independently given applicability, yielding an approximate factorization:
\begin{equation*}
P(h^\mM\mid\mathcal D)  
\approx \prod_{a \in A} 
P(h_{\mathrm{pre}}^a\mid\mathcal D_a)  
\prod_{\ell\in\mathcal L_a}  
P(h_\ell^a\mid\mathcal D_a^{v=\top}),
\end{equation*}
conditioned on successful executions, $\mathcal D_a^{v=\top}$. Preconditions learn from all executions $\mathcal D_a$ as they determine applicability. As both $h_\ell^a$ and $h_{\mathrm{pre}}^a$ have the same form, our discussion applies to both and we omit the subscripts: $P(h \mid \mathcal{D})$.

To estimate the probability of a hypothesis given $\mathcal{D}$, we compare the observed effect of each execution $(s,o,s')$ with the hypothesis' prediction. Specifically, each $h$ predicts whether a literal $\ell$ will be added, deleted, or unchanged in the next state, and we compare this prediction with whether $\ell(o)$ \emph{changes} between $s$ and $s'$. Aggregating these comparisons over all execution-literal pairs in $\mathcal{D}$ yields the number of true positives ($\mathrm{TP}_h$), true negatives ($\mathrm{TN}_h$), false positives ($\mathrm{FP}_h$), and false negatives ($\mathrm{FN}_h$). 

To account for observation noise, we use a soft-consistency likelihood so that an incorrect prediction decreases, without eliminating, a hypothesis' posterior probability: 
\[P(\mathcal{D} \mid h) = \epsilon^{\mathrm{FP}_h + \mathrm{FN}_h}, 0 < \epsilon < 1, \] 
where $\epsilon$ is a hyper-parameter denoting how strongly prediction errors are penalized, with larger values tolerating noisier observations.
As multiple hypotheses may explain the data equally, we define a simplicity prior $P(h) = \alpha^{|C_h|}$ penalizing complex antecedent terms. The hypothesis score $L(h)$ is defined as the unnormalized log-posterior $L(h) \propto \log P(h\mid D)$, to balance predictive fit and model complexity:
\begin{equation}
\label{eq:posterior}
L(h) =  |C_h|\log \alpha + (\mathrm{FP}_h + \mathrm{FN}_h) \log \epsilon.
\end{equation}

While $L(h)$ measures a hypothesis' consistency with $\mD$,  identifying the MAP hypothesis $\hmap = \arg \max_{h \in \mathcal{H}} L(h; \mD)$ remains challenging as each hypothesis set scales combinatorially in the number of literals.

\paragraph{Dynamic Expansion}
To address the combinatorial nature of the hypothesis set, OHCAM performs \textit{dynamic hypothesis expansion}, which incrementally refines candidate hypotheses by adding one literal to the antecedent at a time, expanding only those that have the highest potential to improve over the current best hypothesis score (Lines~\ref{line:dynamicexp}-\ref{line:refinehypothesis}, Alg.~\ref{alg:ohcam}). Let $H_\ell$ be a set of expanded hypotheses for potential effect $\ell$. We initialize $H_\ell$ with the simplest hypotheses: $\mathrm{false} \to \texttt{null}$ (no effect), $\emptyset \to \texttt{add}$ (unconditional add), $\emptyset \to \texttt{del}$ (unconditional delete) (Line 1; Alg.~\ref{alg:ohcam}). A conditional hypothesis $h: C_h \to E_h$ is \textit{refined} by adding a literal $\ell'$ (or its negation $\neg \ell'$) to the antecedent, $h': C_h \cup \ell' \to E_h$ (Line~\ref{line:refinehypothesis}). These refinements define a tree in which each node represents a conditional hypothesis and each edge corresponds to adding one antecedent literal. Dynamic hypothesis expansion performs a best-first branch-and-bound search over the tree, bounding $|H_\ell|$ by the complexity of the MAP condition $|C_{\hmap}|$. 

To guide this search and determine which hypotheses should be refined, each hypothesis is associated with an admissible upper bound on the score attainable by any of its descendants. Adding an antecedent literal makes a condition more specific, so the number of false positives cannot increase and the number of false negatives cannot decrease. Optimistically, a sequence of refinements $h \to \dots \to h'$ eliminates all false positives without introducing false negatives. Every refinement  $h' \succ h$ is thus upper-bounded by
\begin{equation}
\label{eq:optimistic}
U(h) = (|C_h|+1)\log \alpha + \mathrm{FN}_h \log \epsilon \geq L(h').
\end{equation}

The search prioritizes refining hypotheses with the maximal $U(h)$. It terminates when no frontier hypothesis has an upper bound exceeding the score $\lbound$ of the current best hypothesis, $\max_{h \in H_\ell} U(h) \leq \lbound$, thereby returning a MAP hypothesis (Line~\ref{line:whileexpand}; Alg.~\ref{alg:ohcam}). While this establishes the correctness, efficiency depends on minimizing the number of hypotheses explored. 

Since the search depth is determined by the complexity of the target condition, the branching factor, which is the number of literals considered for refinement, must be minimized to the extent possible. Rather than refining with every literal in $\mathcal{L}_a$, OHCAM considers only literals that could plausibly improve the current best hypothesis, as not every literal can produce a child hypothesis whose score exceeds the current best hypothesis (e.g. when it predicts too many false negatives). Adding a literal also incurs an additional simplicity penalty, since Eqn.~\ref{eq:posterior} favors simpler explanations. A refinement can improve the score only by eliminating enough false positives to offset this penalty. We therefore restrict the \emph{refinement literal set} $R$ 
to only those literals that have an upper bound above the current best score and predict sufficient true negatives. This restricts the branching only to literals that correlate with the effect, which is often much smaller than $|\mLa|$. \textit{Appendix A} includes theoretical analysis of these properties. 

\begin{algorithm}[t]
\small
\caption{OHCAM}
\label{alg:ohcam}
\SetAlgoVlined
\DontPrintSemicolon
\SetNlSty{}{}{}
\newcommand{\comm}[1]{\ \textcolor{gray}{// #1}}
\KwIn{Environment $\Delta$; interaction budget $B$; rollouts $N$; horizon $k$; hypothesis pool size $n$}
Initialize $\mH^\mM \gets \{\mathcal{H}^a_\ell \mid a \in \Delta.A,\ \ell \in \mathcal{L}_a \cup \{\mathrm{pre}\}\}$\; \label{line:initialize}
$s \gets \reset(\Delta)$;\ \ $\mD \gets \emptyset$\; 
\For{$b = 1$ \KwTo $B$}{
    \comm{Belief-guided exploration} \label{line:beliefguide} \\
    \For(\comm{imagine trajectory rollouts}){$m = 1$ \KwTo $N$}{
        $s_1 \gets s$;\ \ $\tilde{\mM} \sim P(h \mid \mD)$ \comm{sample model} \label{line:samplemodel}\\
        \For{$i = 1$ \KwTo $k$}{
            $a(o)_i \sim \softmax(U(s, a(o)_{1:\eta}))$ \label{line:softgreedy} \\
            $s_{i+1} \gets \tilde{\mM}(s_i,a(o)_i)$\; \label{line:imaginenext}
        }
        $G_m \gets G(s_{1:k}, a(o)_{1:k})$ \comm{Eq.~\ref{eq:score}} \label{line:cumscore}
    }
    $a(o)^\star \gets a(o)_1^{m^\star}$ \comm{first action of best rollout} \label{line:execaction}\\
$\mD \gets \mD \cup \{(s, a(o)^\star, s', \execr)\}$;\ \ $s \gets s'$\;
    \comm{Dynamic hypothesis expansion} \label{line:dynamicexp} \\
    \For{$(\ell^q, Q, C_Q) \in \getquantified(s,a(o),s')$}{
        $\mathcal{H}^\mM \gets \mH^\mM \cup \mH_{\ell^q}^Q$; initialize with $C_Q$\;
    }
    \ForAll{$\mathcal{H}^a_\ell \in \mH^\mM$}{
        update $L(h; \mD^{\execr=\top})$ for $h \in \mathcal{H}^a_\ell$ \comm{Eq.~\ref{eq:posterior}} \\
        $\hat{L} \gets n\text{-th best } L(h; \mD^{\execr=\top})$ \comm{active bound} \label{line:activebound} \\
        \While{$\exists\, h \in \mathcal{H}^a_\ell : U(h) > \hat{L}$}{ \label{line:whileexpand}
            $\mathcal{H}^a_\ell \gets \mathcal{H}^a_\ell \cup \{(C_h \cup \{\ell'\} \to E_h)\}$ \comm{expand} \label{line:refinehypothesis} 
        }
    }
}
\KwRet{$\hat{\preop_a}, \hat{\effop_a} \gets \arg\max_h P_a(h \mid \mD)$ for $a \in \Delta.A$}
\end{algorithm}

\subsubsection{Quantified Effects} 
So far, we discussed dynamic expansion for conditional effects under a static set of literals $\mathcal{L}_a$ bound only to the parameters of the action. However, actions in complex environments can produce effects involving objects outside of $\mathrm{params}(a)$ 
that must be explained with \textit{universally quantified variables} (UQVs). The addition of UQVs not only expands $\mathcal{L}_a$ with more variables, but multiple candidate UQVs might explain the effect, compounding the complexity of the quantified hypothesis space. To keep UQV learning tractable with our dynamic hypothesis expansion method, we make two assumptions: (i) \emph{identifiability}---all effects matching a quantified literal have a single explanation, and (ii) \textit{locality}~\citep{maoPDSketchIntegratedPlanning}---there exists some relationship, direct or indirect, between the affected object and $\mathrm{params}(a)$. In our cleaning example in Fig.~\ref{fig:overview}, a soda can gets knocked over in the $\textsc{wipe}(\texttt{sponge}, \texttt{plate})$ action because it relates to the plate by $\texttt{on}(\texttt{can}, \texttt{plate})$. 

Formally, consider a grounded action $a(o_1^a, \dots o_n^a)$, where $o^a$ denotes the objects bound to the action parameters. Let $\rho(o_1^e, \dots o_m^e)$ be the action effect involving objects $o_i^e$. If any affected object lies outside the action parameters, $o_i^e \notin o^a$, the effect is represented as a quantified literal 
$\ell^q$ with one or more parameters unbound. Let $\mathbf{Q} = \{Q_1, Q_2, \dots \}$ be the set of candidate UQV sets that could explain the quantified effect. We must bind the unbound parameters of $\ell^q$ to UQVs $Q$ in $\mathbf{Q}$ with corresponding conditions $C_Q$ that only the affected objects satisfy. Under the identifiability assumption, the full space of hypotheses is $\mathcal{H}_{\ell^q}^\mathbf{Q} = \bigcup_{Q \in \mathbf{Q}} \mathcal{H}_{\ell^q}^Q$. For a given UQV set $Q$, the hypothesis space $\mathcal{H}_\ell^Q$ is defined over conjunctions of literals $\mathcal{L}_a^Q$ bound to variables $\mathrm{params}(a) \cup Q$. 

We perform dynamic expansion over sets of hypotheses, favoring simple UQV hypotheses by extending the simplicity prior to penalize more UQVs: $P(h)=\alpha^{|C_h|}\cdot\beta^{|Q|}$, where $\beta \in (0, 1)$ is a penalty parameter on the the hypothesis. We rely on the locality assumption to avoid enumerating every possible quantified variable assignment. We formalize the relationship between objects with a hypergraph over the observed state $s$: each object is a vertex, and each grounded predicate $\rho(o_1^e, \dots, o_k^e) \in s$ induces a hyperedge over its arguments (Fig.~\ref{figure:uqv}). Then, $q = o^e \setminus o^a$ is the set of objects that must be explained by a UQV. 

We enumerate all trees in the state graph that connect every object in $q$ (affected objects) to any object in $o^a$ (action parameters), starting with the simplest. A UQV set $Q$ is formed from the nodes of a tree, excluding $o^a$, and the edges become conditions $C_Q$. If an effect cannot be matched to an existing hypothesis set, we initialize the next simplest quantified hypothesis set $\mH_{\ell^q}^Q$ with antecedent $C_Q$. We perform dynamic expansion on each quantified hypothesis set to further refine the antecedent and select the MAP hypothesis from the sets.

\begin{figure}[t]
    \centering
    \includegraphics[height=25.438mm]{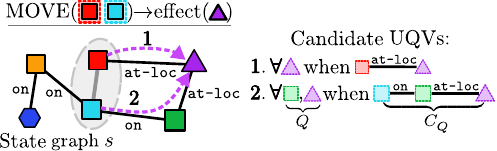}
    \caption{A state graph for UQVs. If an action that intends to move \texttt{red} and \texttt{blue} also affects \texttt{purple}, then we propose sets of UQVs that connect \texttt{purple} to $\{\texttt{red}, \texttt{blue}\}$.}
    \label{figure:uqv}
\end{figure}

\subsection{Belief-Guided Exploration}
So far, we have discussed identifying the MAP hypothesis from an offline dataset $\mathcal{D}$. We now extend this to learning online. 
Since multiple hypotheses can explain the observations seen so far, subsequent action executions should be chosen to maximally distinguish among these competing hypotheses. OHCAM achieves this by selecting actions expected to provide the greatest reduction in posterior uncertainty using Bayesian Active Learning by Disagreement (BALD)~\citep{houlsbyBayesianActiveLearning2011} (Lines~\ref{line:beliefguide}-\ref{line:execaction} in Alg.~\ref{alg:ohcam}). BALD was introduced for classification tasks and seeks to reduce the space of possible models in as few samples as possible. We adopt the objective to our setting. With a finite set of deterministic hypotheses, the BALD utility is the entropy of a weighted vote of predicted outcomes:
\begin{align}
    Z(s, a(o)) =& \mathbb{H}[\sum_h P(\execr, s' \mid s, a(o), h) \cdot P_a(h \mid \mD)].
    \label{eqn:util}
\end{align}
The utility is maximized when plausible hypotheses make different predictions for executing $a(o)$ in state $s$, either about action applicability $\execr$ or about the resulting state $s'$. Assuming independence among different literals, the predictive entropy decomposes into uncertainty over action applicability and uncertainty over individual effects:
\begin{multline}
\mathbb{H}[P(\execr, s' \mid s, a(o))] = \mathbb{H}[P(\execr \mid s, a(o))]\\
 + P(\execr = \top \mid s, a(o))
 \cdot \sum_{\ell} \mathbb{H}[P(s'^{\,\ell(o)} \mid s, a(o))],
\end{multline}
where the effect entropies are discounted by the uncertainty of whether the action is applicable, since the effects are observed only when the action executes successfully. 
BALD requires multiple plausible hypotheses to quantify disagreement. Therefore, we relax the branch-and-bound criterion to maintain the top-$n$ active hypotheses, expanding any hypothesis satisfying $U(h) > L^{(n)}$, where $L^{(n)}$ is the score of the current $n$-th best hypothesis (Line~\ref{line:activebound}; Alg.~\ref{alg:ohcam}).

\vspace{3pt}
\noindent \textbf{Planning Over Imagined Rollouts~} 
Since informative states may require multiple actions to reach, we optimize cumulative information gain over imagined trajectories. Unlike standard rollouts under a fixed dynamics model, imagined trajectories are simulated under posterior-sampled hypotheses to evaluate information gain rather than task reward. Starting from the current state, we simulate $k$-step trajectory by recursively selecting actions and predicting their outcomes using $\tilde{M} \sim P(h \mid \mD)$ (Line~\ref{line:samplemodel}; Alg.~\ref{alg:ohcam}). At each step, we sample $\eta$ actions applicable in $\tilde{M}$, and select one to execute according to a soft greedy bias $a(o)_i \sim \softmax(Z(s, a(o)_{1:\eta}))$ to favor informative trajectories while preserving exploration (Line~\ref{line:softgreedy}; Alg.~\ref{alg:ohcam}). The sampled model predicts the successor state, $s_{i+1}\leftarrow\tilde{M}(s_i,a_i(o_i))$ (Line~\ref{line:imaginenext}; Alg.~\ref{alg:ohcam}). 

Each imagined trajectory is scored by its expected cumulative information gain, with each state's contribution weighted by the probability of reaching it. Let $\Lambda_i$ denote the probability of reaching state $s_i$ along the imagined trajectory, with $\Lambda_1=1$. For $i\geq2$:
\[ \Lambda_i=
\prod_{j=1}^{i-1}
P(\execr_j=\top\mid s_j,a(o)_j)
P(s_{j+1}\mid s_j,a(o)_j).\]
Using Eqn.~\ref{eqn:util}, the trajectory score is then (Line~\ref{line:cumscore}; Alg.~\ref{alg:ohcam}): 
\begin{equation}
\label{eq:score}
G(s_{1:k}, a(o)_{1:k})=\sum_{i=1}^{k}
\Lambda_i\cdot Z(s_i,a(o)_i).
\end{equation}

Given a budget $B$, OHCAM repeatedly plans an informative action using receding-horizon rollouts scored by Eq.~\ref{eq:score}, executes the selected action in the environment, and updates the corresponding action's belief by reweighting the posterior (Eq.~\ref{eq:posterior}) and dynamically expanding hypotheses according to Eq.~\ref{eq:optimistic}. This process repeats until $B$ is exhausted, at which point OHCAM returns the MAP action model $\mathcal{M}'$.

\section{Experiment Setup}
\label{sec:experiments}
We evaluate OHCAM on six benchmark planning domains and on two real-world tasks using a Kinova Gen 3 robot.

\paragraph{Baselines} We compare OHCAM with five action model learning methods that support conditional and quantified effects or online learning: (1) LAMP~\citep{zhuoLearningComplexAction2010}; (2) Conditional-SAM~\citep{mordochSafeLearningPDDL2024} with maximum antecedents set to three (maximum depth in our benchmark domains) and ground truth UQV type provided; (3) Cluster\&Intersect~\citep{chitnisLearningNeuroSymbolicRelational2022}; and (4) OLAM~\citep{lamannaOnlineLearningAction2021}. Baselines (1)-(3) are offline methods which learn conditional and quantified effects. OLAM is an online active learning method but only learns STRIPS effects. Hyperparameters and additional details in \textit{Appendix B}.

To enable comparison with the offline methods and further study how the dataset affects the downstream performance, we consider three types of data collection: (1) \emph{OHCAM dataset} collected with our active learning approach; (2) \emph{Random dataset} collected by uniformly selecting actions; and (3) \emph{Expert dataset} that contains demonstration trajectories gathered by a planner with access to the ground truth model, with 20\% random actions for diversity~\citep{sternEvaluatingPlanningModelLearninga}.

\paragraph{Domains} We perform evaluations in simulation using benchmark planning domains with conditional and quantified effects, from~\citet{mordochSafeLearningPDDL2024}: \textit{Satellite}~\citep{long20033rd}, \textit{Briefcase}, \textit{Miconic}~\citep{bacchus2001aips}, and \textit{Maintenance}, \textit{CityCar}, and \textit{Nurikabe}~\citep{vallati20152014}. 
We additionally introduce two new domains for our hardware evaluation. \textit{MagnetBlocks} which is a variant of BlocksWorld with the additional of magnetic blocks that stick together when stacked on top of one another. \textit{SpongeWorld} which is the cleaning task outlined in Figure~\ref{fig:overview} where the robot is tasked with cleaning a plate in the presence of obstructions on the plate and dirty sponges. Additional details for both experiments can be found in \textit{Appendix C}.

\paragraph{Training and Evaluation } 
For each domain, we use available problem file generators~\citep{seipp-et-al-zenodo2022} to generate ten train and evaluation problems, similar to~\citet{sternEvaluatingPlanningModelLearninga} (see \textit{Appendix C}). The evaluation problems are harder, involving more objects or longer plans. We give each method up to \emph{one hour} per domain for learning. 

Our primary evaluation metrics are \textit{task completion rate} and \textit{sample efficiency}. \textit{Task completion rate} is the average fraction of evaluation tasks solved with plans from the learned models. A task is completed when a plan executed open-loop in the environment reaches a goal state without any failed actions. Plans are generated using Fast Downward. For \textit{sample efficiency}, we learn OHCAM and OLAM online and evaluate the completion rate at 10-step intervals. For offline methods, we evaluate on increasing subsets of the data.

\begin{table}[t]
    \centering
    \setlength{\tabcolsep}{1mm}
   \renewcommand{\arraystretch}{1.5}
\resizebox{\columnwidth}{!}{%
    \begin{tabular}{l|c|c|c|c|c}
        \toprule
          Domain  & OHCAM  & OLAM  & Cond-SAM & C\&I & LAMP \\
        \midrule
             Satellite    & \score{96.0}{4.9} & \score{20.0}{40.0} & \textbf{\score{100.0}{0.0}} & \score{98.0}{4.0} & --- \\
            Maintenance  & \textbf{\score{100.0}{0.0}} & \score{0.0}{0.0} & \score{98.0}{4.0} & \score{0.0}{0.0} & --- \\
             Briefcase    & \textbf{\score{100.0}{0.0}} & \score{0.0}{0.0} & \score{60.0}{48.9} & \score{0.0}{0.0} & \score{0.0}{0.0} \\
            Miconic      & \textbf{\score{100.0}{0.0}} & \score{0.0}{0.0} & \score{82.0}{14.7} & \textbf{\score{100.0}{0.0}} & --- \\
            CityCar      & \textbf{\score{70.0}{30.0}} & \score{0.0}{0.0} & \score{10.0}{15.5} & \score{0.0}{0.0} & --- \\
            Nurikabe     & \textbf{\score{96.0}{8.0}} & \score{0.0}{0.0} & \score{0.0}{0.0} & \score{10.0}{0.0} & ---  \\
        \bottomrule
    \end{tabular}}
    \caption{%
        Task completion rate (\%), averaged over 10 problem instances per domain, along with std. deviation. Models were trained over five trials with 100 actions each. OHCAM and OLAM learn online and offline approaches use expert dataset. Bold values are the best for the domain.
    }
    \label{tab:solving}
\end{table}

\begin{figure*}[t]
    \centering
    \begin{tabular}{@{}cccccc@{}}
        \multicolumn{6}{c}{\includegraphics[width=0.95\textwidth]{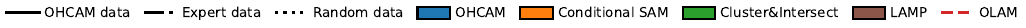}} \\
        \includegraphics[height=1.20in]{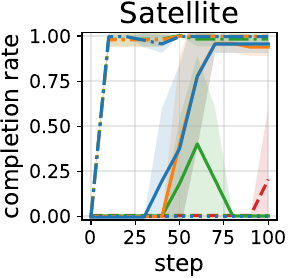} &
        \includegraphics[height=1.20in]{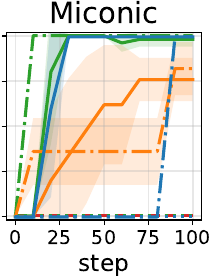} &
        \includegraphics[height=1.20in]{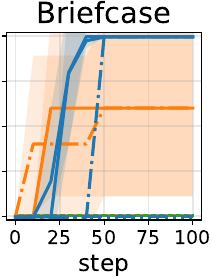} &
        \includegraphics[height=1.20in]{media/26-07-27-sample-efficiency/miconic-efficiency.pdf} &
        \includegraphics[height=1.20in]{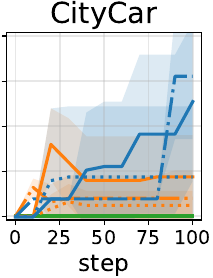} &
        \includegraphics[height=1.20in]{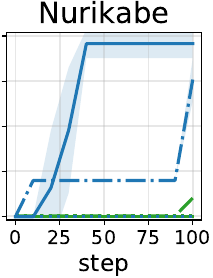}
    \end{tabular}
    \caption{Average problems solved versus number of actions for the six benchmark domains. The color shows the approach and the line type shows the data used. Shading shows standard deviation.}
    \label{fig:sample-efficiency-2607}
\end{figure*}

\begin{figure}[t]
    \centering
    \includegraphics[width=.9\linewidth]{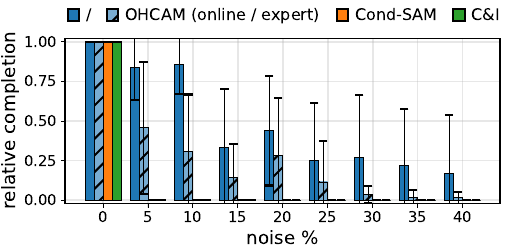}
    \caption{Task completion rate of each approach over 250 samples with increasing noise, relative to noise-less case.}
    \label{fig:noise}
\end{figure}

\section{Simulation Results and Discussion}

\paragraph{Task Completion} Results in Table~\ref{tab:solving} show that OHCAM consistently performs better than or similar to the baselines. OHCAM completed >95\% of tasks on five of six domains with its online active exploration while the next best approach only completed two domains even with expert data. Conditional-SAM learned overly restrictive conditions in larger domains because its strict safety threshold demands more data than the budget. LAMP, which exhaustively enumerates hypotheses, learned extremely slowly, exceeding the training time limit of one hour on many domains. Cluster\&Intersect learned an action for each combination of objects seen in training. However, due to its lack of UQVs, it was unable to generalize to more objects in evaluation tasks. OLAM only succeeded in the \textit{Satellite} domain without conditional effects, highlighting its limits as a STRIPS method to learn expressive models.

\paragraph{Sample Efficiency}
We investigate the sample efficiency of OHCAM compared to the baselines in Figure~\ref{fig:sample-efficiency-2607} using data collected from random, expert, and OHCAM. Results are averaged over 10 problem instances and standard deviation is included over five trials. Across the three data regimens, OHCAM learning from OHCAM-collected data far exceeds the performance of random data, and within 100 actions, OHCAM recovers the performance of models trained on expert demonstrations without the benefit of prior domain knowledge, demonstrating OHCAM's ability to plan to execute informative actions. The quality of OHCAM's actively gathered data is also evident when used by other baselines. Across domains, the baselines trained on OHCAM's online data reach higher problem solving rates than the same baselines trained on random data, indicating OHCAM data collection approach is useful regardless of the learning approach. 

\paragraph{Robustness to Noise}
In real-world settings, there is noise in perception and robot execution. To evaluate robustness to noise, we compare the approaches that can learn conditional effects on the six domains, by randomly corrupting atoms in the observation~\citep{lamannaActionModelLearning2024}. In Figure~\ref{fig:noise}, we vary the noise from zero to 40\% and report the task completion rate relative to learning without noise, averaged over domains. As noise is not limited to objects in the action parameters, this introduces the illusion of quantified effects. Thus, we increase the budget to 250 samples so that the learning approaches can discern between illusory and real effects. Our results show that OHCAM performance degrades gracefully with increasing noise across the six simulation domains. Under 10\% noise, the random fluctuations were insufficient to overcome OHCAM's simplicity prior, preventing false effects. In contrast, the baselines decline sharply with increased noise. Further, OHCAM data collection was more robust to noise than expert data, supporting the claim that OHCAM actively explores to reduce uncertainty.

\section{Robot Hardware Results}
We use a Kinova Gen3 7 DoF arm for manipulation and an Intel RealSense D435i camera to capture RGB and depth images. We use SAM3~\citep{carion2026} to obtain segmentation masks and localize object centers for grasping. Actions are implemented as manipulation skills parameterized by the object locations. More details are in \textit{Appendix D}. We compare OHCAM with the two baselines that showed success learning conditional effects in simulation, i.e. Cluster\&Intersect and Conditional-SAM. Figure~\ref{fig:real_world_results} reports the task completion rate of different approaches, across 10 evaluation tasks in \textit{MagnetBlocks} and five evaluation tasks in \textit{SpongeWorld}. Given the perception and execution noise inherent to robot systems (about 5\% in our system), we collect 250 action samples in both domains using data collected by OHCAM online. Our results show that OHCAM is robust to perception noise, solving all evaluation tasks, compared to the na\"{\i}ve model without conditional effects which is only capable of solving 20\% and 40\% of the task, respectively. The baselines fail as they are unable to address the noise in the data. In MagnetBlocks they learn a model that is incapable of finding a plan for any task due to the higher rate of noise compared to SpongeWorld. A video showcasing robot data collection and evaluation is included in the supplemental materials. 

\begin{figure}[t]
    \centering
    \includegraphics[width=.95\linewidth]{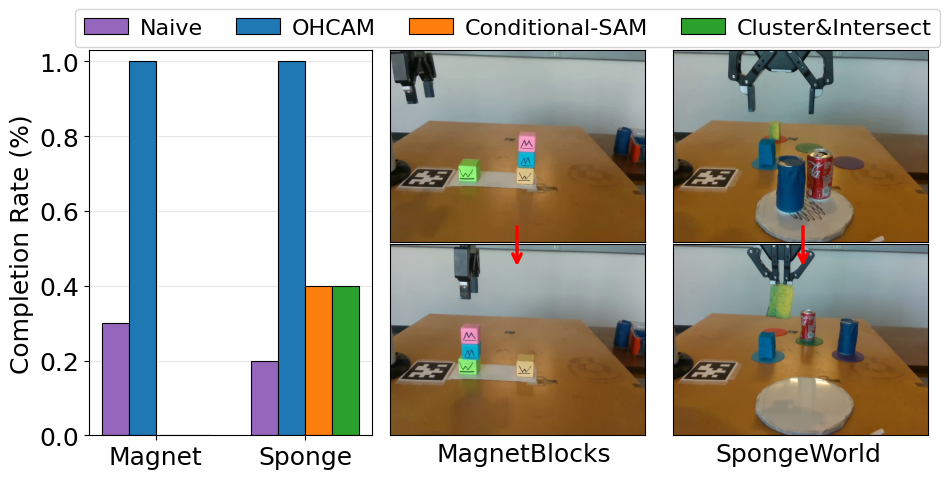}
    \caption{Task completion rate in the two tabletop manipulation tasks, along with an example of start (top) and goal (bottom) configurations.}
    \label{fig:real_world_results}
\end{figure}

\section{Conclusion}
We present OHCAM to learn action models with conditional effects and quantified variables. By combining dynamic hypothesis expansion with belief-guided exploration, OHCAM learns expressive action models from limited interactions while remaining robust to noisy observations. Experiments on benchmark planning domains and a Kinova Gen3 robot demonstrate that OHCAM outperforms state-of-the-art baselines in terms of sample efficiency and downstream planning performance. Future work will extend the framework to partially observable settings and investigate scaling OHCAM to larger relational domains. 

\clearpage
\bibliography{ref}
\clearpage

\section{Appendix A: Theoretical Results and Proofs}

\renewcommand{\theequation}{A\arabic{equation}}
\setcounter{equation}{0}

In this work, we presented dynamic hypothesis expansion, an algorithm to find the MAP hypothesis of a conditional effect. Here, we establish the correctness of the algorithm.

\begin{proposition}
    For a fixed dataset $\mathcal{D}$ and finite literals $|\mathcal{L}_a|$, best-first dynamic hypothesis expansion with Eq. 2 returns a MAP hypothesis.
\end{proposition}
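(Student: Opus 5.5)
We argue in the standard branch-and-bound way, in three steps: termination, admissibility of the bound $U$, and a stopping argument showing no unexplored hypothesis can beat the incumbent.

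\emph{Termination.} Every hypothesis is reached from one of the three root hypotheses by a finite chain of refinements, each adding a literal $\ell'$ or $\neg\ell'$ from the finite set $\mathcal{L}_a\setminus\{\ell\}$. An antecedent contains each literal at most once, so the refinement tree has depth at most $|\mathcal{L}_a|$ and finite branching. Best-first search never expands a node twice, so it stops after finitely many expansions.

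\emph{Admissibility (the main step).} We prove Eq.~2 for every strict descendant $h'\succ h$. Fix the dataset $\mathcal{D}$. The antecedent $C_{h'}$ is a superset of $C_h$, and both hypotheses share the effect $E_h$. Hence the set of execution--literal pairs on which $h'$ fires is a subset of those on which $h$ fires.

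1. On pairs where $h'$ does not fire but $h$ does, a false positive of $h$ can become a true negative, and a true positive can become a false negative. Otherwise predictions agree.
2. It follows that $\mathrm{FP}_{h'}\le \mathrm{FP}_h$ and $\mathrm{FN}_{h'}\ge \mathrm{FN}_h$.
3. Since $h'$ is a strict descendant, $|C_{h'}|\ge |C_h|+1$.
4. Because $\log\alpha<0$ and $\log\epsilon<0$, Eq.~1 gives
\[L(h') = |C_{h'}|\log\alpha + (\mathrm{FP}_{h'}+\mathrm{FN}_{h'})\log\epsilon \le (|C_h|+1)\log\alpha + \mathrm{FN}_h\log\epsilon = U(h),\]
where we used $\mathrm{FP}_{h'}\ge 0$ and $\mathrm{FN}_{h'}\ge\mathrm{FN}_h$.

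Two subtleties need care here. First, the root $\mathrm{false}\to\texttt{null}$ has no descendants and must be treated as a leaf. Second, a literal and its negation in the same antecedent give an unsatisfiable condition, which only raises $\mathrm{FN}$, so the bound still holds.

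\emph{Optimality at termination.} Let $\hat L$ be the best score among generated hypotheses, attained by some $\hat h$. Suppose for contradiction that some hypothesis $h^*$ has $L(h^*)>\hat L$. Then $h^*$ was never generated, since otherwise $\hat L\ge L(h^*)$.

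1. Follow the refinement path from the roots to $h^*$.
2. Let $h$ be the deepest generated node on this path. Then $h$ is a frontier node whose relevant child on the path was never generated.
3. By admissibility, $U(h)\ge L(h^*)>\hat L$.
4. This contradicts the stopping rule $\max_{h\in H_\ell}U(h)\le\hat L$.

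So $\hat h$ attains $\max_h L(h)$. Since $L(h)$ equals $\log P(h\mid\mathcal{D})$ up to an additive constant, $\hat h$ is a MAP hypothesis.

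\emph{Refinement-set restriction.} One gap remains. If expansion uses only the restricted refinement set $R$, the step ``the relevant child was never generated'' must still be justified for children excluded by $R$. Our plan is to show that a literal excluded from $R$ yields children whose own bound $U$ is already at most $\hat L$. By admissibility applied to those children, all their descendants are then dominated as well. This is the argument we expect to be the most delicate, because the exact criterion defining $R$ (``sufficient true negatives'') has to be made precise to support it.
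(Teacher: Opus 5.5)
Your proof is correct and is essentially the paper's argument. Both rest on admissibility of $U$ and on one observation: until a better hypothesis (in the paper, the MAP) has been generated, some generated but unexpanded ancestor on its refinement path has $U>\hat{L}$, so the search cannot have stopped. You package this as a single contradiction at termination, which also avoids the paper's loose claim that $\hat{L}<G$ holds under ties. You also prove admissibility via $\mathrm{FP}$/$\mathrm{FN}$ monotonicity, whereas the paper's proof takes it as given.

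The ``remaining gap'' about $R$ is not part of this proposition. The paper handles it in Proposition~2 by showing $C_{\mathrm{map}}\subseteq R_W$, so that the refinement path to $h_{\mathrm{map}}$ survives; that is the route to take, with $h^*=h_{\mathrm{map}}$ in your contradiction. Your planned claim, that every child built from an excluded literal has $U\le\hat{L}$, can fail for literals excluded by the $\mathrm{TN}$ test, because $U$ depends only on $|C_h|$ and $\mathrm{FN}_h$.
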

\begin{proof}
We aim to show in three parts that the hypothesis returned by best-first dynamic hypothesis expansion identifies a MAP hypothesis. First, we show that the MAP hypothesis always remains refinable during branch-and-bound search until it is generated. Second, we show that the MAP hypothesis will eventually be generated. Third, we show that the termination criterion establishes that the returned hypothesis is the MAP.

To begin with, let $h_\mathrm{map} = \arg\max_{h\in\mathcal{H}} L(h; \mathcal{D}) = (C_\mathrm{map} \to E_\mathrm{map})$  be a MAP hypothesis. Let $G=L(h_\mathrm{map})$ be the corresponding hypothesis score. Let each frontier hypothesis have an admissible bound 

\begin{equation*}
U(h) \geq \max_{h' \succ h} L(h')
\end{equation*}
for any descendant $h' \succ h$. Let $H_\ell$ be the set of generated hypotheses. The branch-and-bound algorithm maintains a frontier of generated hypotheses that have not yet been selected for refinement, a current best score $\hat{L} = \max_{h \in H_\ell} L(h)$, and only refines nodes that satisfy $U(h) > \hat{L}$. 

Let $h^0 = (\emptyset \to E_\mathrm{map})$ be the unconditional hypothesis with consequent $E_\mathrm{map}$. All consequents are initialized at the start, so $h^0 \in H$. By a series of refinements with each $\ell' \in C_\mathrm{map}$, $h^0 \to h^1 \to \dots \to h^{|C_\mathrm{map}|-1} \to h_\mathrm{map}$, each $h^i$ is an ancestor of $h_\mathrm{map}$, which means that $U(h^i) \geq L(h_\mathrm{map})=G$. Until $h_\mathrm{map}$ has been generated, there will always remain one of $h^i$ in the frontier, since before $h_\mathrm{map}$ is generated, $\hat{L} < G \leq U(h^i)$. 

Next, under the assumption of a finite set of literals $|\mathcal{L}_a|$, the hypothesis space is finite. Because $h_\mathrm{map}$ is reachable by $|C_\mathrm{map}|$ single-literal refinements and there is always a frontier ancestor on this path, $h_\mathrm{map}$ is eventually generated. Then since the MAP hypothesis by definition has the best score, then $\hat{L} \gets G$, so the new current best is a MAP hypothesis.

Finally, we use the termination criteria to show that $h_\mathrm{map}$ will remain the current best. The search terminates when all frontier hypotheses have the potential for refinement no greater than the current best hypothesis, $U(h) \leq \hat{L}$ . By admissibility, every ungenerated descendant of every frontier node, $h' \succ h$ satisfies $L(h') \leq U(h) \leq \hat{L}$, and the current best is only replaced when some $L(h) > \hat{L}$. Therefore, no ungenerated hypothesis can become the new best, so when the algorithm terminates, a MAP hypothesis is returned.
\end{proof}

Next, we establish that the hypotheses generated are bounded in depth by the MAP hypothesis, meaning the search will automatically be bounded by complexity suggested by the data rather than according to some pre-defined maximum depth hyperparameter. Further, we show that we can safely reduce the branching factor to a refinement literal set thresholded by the current best hypothesis.
\begin{proposition}
For a fixed dataset $\mathcal{D}$, let $\hmap$ be the hypothesis returned by dynamic hypothesis expansion, $\zeta = \log\alpha / \log\epsilon$ be the complexity-to-noise log ratio, and $R_W=\{\ell^r\in\mathcal{L}_a \mid U(h_{\ell^r})>\lbound_W \land \mathrm{TN}_{h_{\ell^r}} > \zeta \}$ be the refinement literal set after a warm-up period of $W$ hypotheses are expanded with best score $\lbound_W$. Then the maximum depth of an expanded hypothesis is $d_{\mathrm{map}} = \left\lfloor |C_{\hmap}| + (\mathrm{FP}_{\hmap}+\mathrm{FN}_{\hmap}) \zeta^{-1} \right\rfloor$. Consequently, the total number of hypotheses expanded is

\[ |H_\ell| \leq W + \sum_{d=0}^{\min\{d_{\mathrm{map}},|R_W|\}} \binom{|R_W|}{d}. \]
\end{proposition}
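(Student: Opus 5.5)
Both claims rest on a single fact about the bound in Eq.~\ref{eq:optimistic}. Note that $\log\alpha<0$ and $\log\epsilon<0$, so $\zeta=\log\alpha/\log\epsilon>0$. The score of the returned hypothesis is $G=L(\hmap)=|C_{\hmap}|\log\alpha+(\mathrm{FP}_{\hmap}+\mathrm{FN}_{\hmap})\log\epsilon$, and the running best $\lbound$ never decreases. By Proposition~1 it is eventually at least $G$; indeed the search only stops expanding once no frontier bound exceeds $\lbound$.

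\textbf{Depth bound.} A hypothesis $h$ at depth $d$ (so $|C_h|=d$) is only refined if $U(h)>\lbound$. Since $\mathrm{FN}_h\ge 0$ and $\log\epsilon<0$, we have $U(h)\le (d+1)\log\alpha$. Any node whose children are generated therefore satisfies $(d+1)\log\alpha>\lbound$. To compare against $G$, I would argue at termination: every generated node of depth $d$ has a parent at depth $d-1$ that was refined while $\lbound\le G$. This only gives the weak condition $d\log\alpha>\lbound$ at that time, so I instead phrase the claim against the final state. Once $\lbound=G$, any hypothesis with $|C_h|\log\alpha \le G$ has all its descendants bounded by $L(h')\le|C_{h'}|\log\alpha\le G$, so none of them is expanded. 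Dividing $d\log\alpha> G$ by $\log\alpha<0$ gives
\[ d< |C_{\hmap}|+(\mathrm{FP}_{\hmap}+\mathrm{FN}_{\hmap})\zeta^{-1}, \]
so $d\le d_{\mathrm{map}}$ after taking the floor. The main obstacle is the transient before $\lbound$ reaches $G$. During warm-up, deeper nodes could be generated. I would charge any such nodes to the $W$ warm-up term, assuming by the definition of the warm-up that $\lbound_W$ is already the relevant threshold afterwards. Some slack in whether the inequality is strict or non-strict may be needed here.

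\textbf{Branching bound.} After warm-up, I show that refining with a literal $\ell^r\notin R_W$ can never yield a child that beats $\lbound_W$, so these literals can be pruned safely. There are two cases.

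\emph{Case $U(h_{\ell^r})\le\lbound_W$.} Any hypothesis containing $\ell^r$ in its antecedent is more specific than the single-literal hypothesis $h_{\ell^r}$. Its false negatives are therefore at least $\mathrm{FN}_{h_{\ell^r}}$, and its complexity is at least $1$. Hence its score is at most $U(h_{\ell^r})\le\lbound_W$. Strictly, a hypothesis of greater depth pays additional $\log\alpha$ penalties, so the bound holds a fortiori.

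\emph{Case $\mathrm{TN}_{h_{\ell^r}}\le\zeta$.} Adding $\ell^r$ to a parent $h$ removes at most $\mathrm{TN}_{h_{\ell^r}}$ false positives, namely those executions where $\ell^r$ fails and no effect occurs. It also costs one extra $\log\alpha$. The net change in score is therefore at most $\log\alpha-\mathrm{TN}_{h_{\ell^r}}\log\epsilon\le 0$, so the refinement never strictly improves on its parent. This lets us restrict all post-warm-up refinements to literals in $R_W$.

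\textbf{Counting.} Antecedents are sets of distinct literals from $R_W$, and refinement order does not matter, since refinements can be canonicalized by a fixed literal order or by duplicate detection. The number of hypotheses at depth $d$ is then at most $\binom{|R_W|}{d}$. Depth is at most $d_{\mathrm{map}}$, and it is also trivially at most $|R_W|$. Summing over $d$ and adding the $W$ warm-up expansions gives the stated bound on $|H_\ell|$.
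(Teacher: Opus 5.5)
Your depth bound has a genuine gap, and it is the transient you flag yourself. The missing idea is that the search is \emph{best-first}, which is what the paper's Step~1 relies on. By the invariant from Proposition~1, until $\hmap$ is generated the frontier always contains an ancestor $h^i$ of $\hmap$ with $U(h^i)\ge G$. Since the search always selects the frontier node of maximal $U$, every node ever selected satisfies $U(h)\ge G$. Once $\hmap$ is generated, $\lbound=G$ and only nodes with $U(h)>G$ are refined. Hence every refined $h$ satisfies $G\le U(h)\le(|C_h|+1)\log\alpha$, so every generated child has depth at most $d_{\mathrm{map}}$ from the very first expansion: there is no transient and nothing to charge to $W$.

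Your workaround does not close the gap. $W$ is an arbitrary parameter of the statement and $\lbound_W$ can be strictly below $G$. The nodes refined after warm-up while $\lbound_W\le\lbound<G$ are then neither among the $W$ warm-up expansions nor shown to have depth at most $d_{\mathrm{map}}$. The first claim (about \emph{every} expanded hypothesis) is also left unproved for warm-up nodes. This is not cosmetic: under a non-best-first order, a node with $\lbound<U(h)<G$ can be refined and produce children deeper than $d_{\mathrm{map}}$. For example, take an over-general depth-$2$ antecedent with $\mathrm{FN}_h=0$, so $U(h)=3\log\alpha$, when the MAP is an error-free depth-$2$ condition and $\epsilon$ is small. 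Your counting step uses the depth bound for all post-warm-up hypotheses, so the final inequality inherits this gap.

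The branching part is sound and is essentially the contrapositive of the paper's Step~2. The paper shows each $\ell^r\in C_{\hmap}$ passes both filters. It gets $U(h^r)\ge G$ by admissibility, and $\mathrm{TN}_{h^r}\ge\mathrm{FP}_{h^-}-\mathrm{FP}_{\hmap}>\zeta$ by comparing $\hmap$ with $h^-=(C_{\hmap}\setminus\{\ell^r\}\to E_{\mathrm{map}})$. You instead show that any hypothesis using a filtered-out literal is weakly dominated. This has the small advantage of not needing $\hmap$ to be strictly better than $h^-$. Two small repairs are needed:
\begin{itemize}
\item In your first case, $L(h')\le U(h_{\ell^r})$ holds only for proper refinements ($|C_{h'}|\ge 2$). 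With $|C_{h'}|=1$ you only get $\log\alpha+\mathrm{FN}_{h_{\ell^r}}\log\epsilon$. This is harmless, since $h_{\ell^r}$ itself must be evaluated to form $R_W$.
\item In your second case, ``never strictly improves on its parent'' should be stated globally as $L(h')\le L(C_{h'}\setminus\{\ell^r\}\to E_{h'})$ for every $h'$ containing $\ell^r$. That global form is what actually licenses pruning $\ell^r$ everywhere in the tree.
\end{itemize}
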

\begin{proof}
We bound the number of expanded hypotheses in three steps. First, we show that every expanded hypothesis has depth at most $d_\mathrm{map}$. Second, we show that restricting refinement to the literal set $R_W$  preserves the MAP hypothesis. Finally, we count the number of hypotheses that can be generated under these two restrictions to obtain the bound.

\noindent \textbf{Step 1:} Let $G=L(\hmap)$ denote the MAP score. The search expands hypotheses in decreasing order of $U(h)$ and maintains the current best hypothesis score $\lbound$. Since $\lbound$ increases monotonically and only hypotheses satisfying $U(h) > \lbound$ are expanded, every hypothesis selected for refinement satisfies $U(h)\geq G$. Consider a hypothesis $h$ whose refinement produces a child of depth $d=|C_{h}|+1$. Let $\zeta=\frac{\log\alpha}{\log\epsilon}$ denote the complexity-to-noise log ratio, i.e. minimum number of prediction errors that must be eliminated to justify adding one literal. Since $\mathrm{FN}_{h}\log\epsilon\leq 0$, using Eqn. 2:
\begin{align*}
G\leq U(h) =d\log\alpha+\mathrm{FN}_{h}\log\epsilon \leq& \,d\log\alpha. \\
\end{align*}
Since $\log\alpha <0$ and substituting $G$ and $\zeta$, every generated child has depth at most $d_{\mathrm{map}} =\left\lfloor |C_{\hmap}| +(\mathrm{FP}_{\hmap}+\mathrm{FN}_{\hmap}) \zeta^{-1} \right\rfloor$. 

\noindent \textbf{Step 2:} Let $R_W=\{\ell^r\in\mathcal{L}_a \mid U(h^r)>\lbound_W \land \mathrm{TN}_{h_{\ell^r}} > \zeta \}$ be the refinement literal set after $W$ hypotheses have been expanded, with single-literal hypothesis $h^r = (\ell^r \to E_\mathrm{map})$. For every $\ell^r \in C_\mathrm{map}$, $h^r$ is an ancestor of $\hmap$, so $U(h^r) \geq L(\hmap)$. Now consider the hypothesis with $h^- = (C_\mathrm{map} \setminus \{ \ell^r \} \to E_\mathrm{map})$. Since $|C_\mathrm{map}| - |C_{h^-}|=1$, the inequality $L(\hmap)-L(h^-)>0$ implies that $(\mathrm{FP}_{h^-}-\mathrm{FP}_{\hmap})-(\mathrm{FN}_{\hmap}-\mathrm{FN}_{h^-})>\zeta$. Therefore, $\mathrm{FP}_{h^-}-\mathrm{FP}_{\hmap}>\zeta$.
Thus removing $\ell^r$ decreases the number of true negatives by at least $\zeta$. Therefore, $C_{\hmap} \subseteq R_W$. 

\noindent \textbf{Step 3:} Every remaining hypothesis corresponds to a conjunction of literals from $R_W$ with 
depth at most $d_{\mathrm{map}}$. The number of such conjunctions is at most $\sum_{d=0}^{\min\{D_{\mathrm{map}},|R_W|\}} \binom{|R_W|}{d}$. Any  hypothesis in $W$ that contains a literal $\ell'$ not in $R_W$ will not be refined, so there are at most $W$ extra hypotheses expanded containing literals not in $R_W$, yielding at most \[ |H_\ell| \leq W+ \sum_{d=0}^{d_{\mathrm{map}}} \binom{|R_W|}{d}. \]
\end{proof}

Finally, we bound the complexity of the MAP hypothesis by the true conditional effect in the environment. This means that it will not needlessly search for complex explanations of simple conditional or unconditional effects.

\begin{corollary}
In the noiseless case, the hypothesis returned by dynamic hypothesis expansion $h_\mathrm{map}$ is bounded in complexity by the complexity of the true environment conditional effect $h^*$, $|C_\mathrm{map}| \leq |C^{*}|$. Consequently, dynamic hypothesis expansion does not generate any hypothesis deeper than $d^*=|C^*|$. With refinement literal set $R_W$, then 

\[ |H_\ell| \leq W + \sum_{d=0}^{d^*} \binom{|R_W|}{d}. \]
\end{corollary}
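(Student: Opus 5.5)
In the noiseless case I would compare $\hmap$ with the true effect $h^* = (C^* \to E^*)$ as a candidate hypothesis. First I would argue that $h^*$ lies in the hypothesis space and is perfectly consistent with $\mD$. Without noise, every successful execution records the literal change exactly as the true conditional effect dictates. Hence $h^*$ predicts every execution--literal pair correctly, so $\mathrm{FP}_{h^*} = \mathrm{FN}_{h^*} = 0$. By Eq.~1 this gives $L(h^*) = |C^*|\log\alpha$. I would check that $C^*$ is a conjunction over $\mathcal{L}_a \setminus \{\ell\}$, or over $\mathcal{L}_a^Q$ for a quantified effect. This holds under the identifiability and locality assumptions, so $h^*$ is in $\mathcal{H}$ and reachable from the initial consequent $\emptyset \to E^*$.

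Next I would use Proposition~1: dynamic hypothesis expansion returns a MAP hypothesis, so $L(\hmap) \geq L(h^*)$. Write this out with Eq.~1:
\[
|C_{\mathrm{map}}|\log\alpha + (\mathrm{FP}_{\hmap}+\mathrm{FN}_{\hmap})\log\epsilon \;\geq\; |C^*|\log\alpha .
\]
The error term is nonpositive because $\log\epsilon < 0$. Dropping it gives $|C_{\mathrm{map}}|\log\alpha \geq |C^*|\log\alpha$. Dividing by $\log\alpha < 0$ yields $|C_{\mathrm{map}}| \leq |C^*|$.

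For the depth claim I would not reuse $d_{\mathrm{map}}$ from Proposition~2 directly, since it contains the term $(\mathrm{FP}_{\hmap}+\mathrm{FN}_{\hmap})\zeta^{-1}$. Instead I would repeat Step~1 of that proof with $G = L(\hmap) \geq L(h^*) = |C^*|\log\alpha$. Any refined node $h$ producing a child of depth $d$ satisfies $G \leq U(h) \leq d\log\alpha$. Combined with the previous inequality this gives $|C^*|\log\alpha \leq d\log\alpha$, hence $d \leq |C^*| = d^*$.

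Finally, Steps~2 and~3 of Proposition~2 carry over unchanged. Step~2 shows that restricting to $R_W$ preserves the MAP hypothesis, and it does not depend on noise. Counting conjunctions of depth at most $d^*$ over $R_W$, plus the $W$ warm-up hypotheses, gives the stated bound on $|H_\ell|$.

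The main obstacle is the first step: establishing that $h^*$ is genuinely representable in the searched space with zero errors. This needs some care for quantified effects, which rely on the identifiability and locality assumptions. It also needs care for preconditions, where the relevant data are all of $\mD_a$ rather than only the successful executions. If exact representability fails, I would fall back to the weaker bound $d_{\mathrm{map}}$ from Proposition~2. Everything after the first step is just inequality manipulation with $\log\alpha, \log\epsilon < 0$.
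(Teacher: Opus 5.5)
Your proposal is correct and follows the paper's argument. You compare $\hmap$ to the error-free $h^*$ via the MAP property from Proposition~1, divide by $\log\alpha<0$, and then apply the depth and counting steps of Proposition~2. The only difference is that the paper keeps the error term, giving $|C_{\mathrm{map}}| + (\mathrm{FP}_{\hmap}+\mathrm{FN}_{\hmap})\zeta^{-1} \leq |C^*|$; this is exactly $d_{\mathrm{map}} \leq |C^*|$, since $d_{\mathrm{map}} = \lfloor G/\log\alpha \rfloor$. So your re-derivation through Step~1 is equivalent, and your worry about reusing $d_{\mathrm{map}}$ is unnecessary.
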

\begin{proof}
Assume there is no noise in observation or execution. Let $h^*: (C^* \to E^*)_\ell^a$ be the simplest hypothesis (minimal $|C^*|$) that perfectly predicts every action execution in the environment, $(s^\ell, a(o), s'^\ell) \gets \Delta$ from every state reachable by any initial task state $s_0^\tau$ . Let $h_\mathrm{map}$ be the hypothesis returned by dynamic hypothesis expansion, which is MAP by Proposition 1. By the definition of the MAP hypothesis, $L(h_\mathrm{map}) \geq L(h^*)$. Let $\mathcal{E}_h=\mathrm{FP}_h+\mathrm{FN}_h$. Expanding it out and simplifying:

\begin{equation*}
|C_\mathrm{map}| \log \alpha + \mathcal{E}_\mathrm{map}\log\epsilon \geq |C^*|\log\alpha + \mathcal{E}^* \log\epsilon
\end{equation*}
since $h^*$ is defined as a perfect prediction, then $\mathcal{E}^*=0$. Dividing by $\log\alpha$, 

\begin{equation}
    \label{eq:cstarbound}
|C_\mathrm{map}| + \mathcal{E}_\mathrm{map}\log\epsilon / \log\alpha \leq |C^*|
\end{equation}
Since $\log\epsilon < 0$,  the hypothesis returned by dynamic hypothesis expansion is bounded in complexity by the true environment conditional effect $|C_\mathrm{map}| \leq |C^*|$. 

By Proposition 2, the depth of hypotheses generated in the search is bounded by $|C_\mathrm{map}|$. We now establish that the maximum depth of a generated hypothesis is no more than $|C^*|$.

From Proposition 2, $d_\mathrm{map} = \lfloor |C_\mathrm{map} + \mathcal{E}_\mathrm{map}\log\epsilon / \log\alpha \rfloor$. Straightforwardly from Eq.~\ref{eq:cstarbound}, then $d^* = |C^*| \geq d_\mathrm{map}$. Therefore, dynamic hypothesis expansion will generate no hypothesis deeper than $d^*$ and the number of hypotheses generated with respect to the refinement literal set $R_W$ after $W$ hypotheses have been expanded, is 

\[ |H_\ell| \leq W + \sum_{d=0}^{d^*} \binom{|R_W|}{d}. \]
\end{proof}

\section{Appendix B: Hyperparameters and Additional Results}
\subsection{Appendix B.1: Algorithm Hyperparameters}

\noindent \emph{1. Likelihood and prior:}  The parameter $\epsilon$ in the soft-consistency likelihood has a natural interpretation in terms of prediction noise. Let $\xi$ be the probability that any hypothesis will make a false prediction due to noise. Then a natural likelihood is

\begin{equation*}
P(\mD \mid h) \propto (1-\xi)^{\mathrm{TP}_h+\mathrm{TN}_h}\xi^{\mathrm{FP}_h+\mathrm{FN}_h}.
\end{equation*}

Each hypothesis in a set has the same amount of data, $|\mD|=TP+TN+FP+FN$, so we can substitute and divide off a common term, $(1-\xi)^{|\mD|}$. Then with $\epsilon = \frac{\xi}{1-\xi}$,
$$
P(D | h) \propto \left(\frac{\xi}{1-\xi}\right)^{FP_h+FN_h} = \epsilon^{FP_h+FN_h}
$$
So for an expected amount of noise $\xi$, choose $\epsilon=\frac{\xi}{1-\xi}$. For hardware experiments, we use $\epsilon=0.1$, which corresponds to 11\% noise, a reasonable upper limit. For our noiseless experiments, setting $\epsilon$ to any very small value $\epsilon \ll \alpha$ effectively prunes hypotheses after a single missed prediction.

The simplicity prior $\alpha$ captures how much less probable a conditional hypothesis is compared to a hypothesis with one fewer antecedent. Across planning domains, conditional effects tend to be uncommon, and effects with two antecedent terms are even much less so. It is difficult to empirically estimate the frequency of conditional effects in a domain of interest in advance, but we set $\alpha=0.1$ to represent that conditional effects are sparse but not too unlikely.

\noindent \emph{2. Online Planning Budget:} In active learning, there is a tradeoff between spending more time searching for an informative action and executing a less informative action quicker.
We therefore use a maximum time budget $T_\mathrm{max}$ to imagine rollouts and plan, after which the agent must make a decision. In manipulation domains, a skill execution typically takes five to 20 seconds, so we set $T_\mathrm{max}=5\,\mathrm{s}$. We additionally cap planning at $N=1000$ imagined trajectories, and OHCAM returns the best action available when either budget is exhausted. 

\noindent \emph{3. Rollouts:} The rollout parameters balance planning depth against exploration. Longer rollouts reason about richer interactions but reduce the total number of trajectories that can be evaluated within the planning budget.
We set the rollout length to $k=4$ which is enough for a manipulation agent to pick and place two objects. This allows the agent to plan and test certain interactions between objects such as placing the can on the plate and wiping. At each rollout state, $\eta=6$ candidate actions are sampled, providing several opportunities for informative actions to be selected while maintaining fast rollout generation. Additionally we add a small amount of Gaussian noise to the cumulative score with scale $\gamma=10^{-4}$ to avoid biased action selection when no rollout finds significant disagreement.

\noindent \emph{4. Hypothesis diversity:}
OHCAM expands the top $n$ hypotheses to have a diverse set of hypotheses for disagreement. Having a higher $n$ means that more of the hypothesis space is represented in the active set, giving a better estimation of the probability and more disagreement signal at the cost of expanding. But since the BALD utility is a function of $n$, it can impact performance. We use $n=128$ which allows the agent to roll out many trajectories within the limit.

\subsection{Appendix B.2: Additional Results}
This section presents results with two metrics, in addition to the evaluation metrics and results reported in the main text: (1) \emph{predictive power} metrics~\citep{sternEvaluatingPlanningModelLearninga}, and (2) \emph{number of hypotheses expanded}. 

\paragraph{Predictive power}
Predictive power estimates how well each learned model predicts action outcomes. This metric is important because a model might make false predictions and yet succeed. For example, Cluster\&Intersect on the \textit{Miconic} domain underestimated the amount of passengers that would board, yet still succeeded because there was no penalty for visiting the same floor multiple times. Predictive power is measured against a set of evaluation transitions, and factored into predicted applicability and predicted effects, corresponding to preconditions and effects. In Table~\ref{tab:predictive_power}, we report the predictive power for both action applicability and effects. 

\begin{table}[t]
    \centering

    \label{tab:predictive-results}
    \scriptsize
    \setlength{\tabcolsep}{2pt}
    \begin{tabular}{ll ccccc}
        \toprule
        \multicolumn{2}{l}{\textbf{Approach}}
            & OHCAM & OLAM & Cond-SAM & C\&I & LAMP \\
        \midrule
        \multirow{6}{*}{\rotatebox[origin=c]{90}{App. Precision (\%)}}
            & Satellite    & \textbf{\score{100.0}{0.0}} & \textbf{\score{100.0}{0.0}} & \textbf{\score{100.0}{0.0}} & \textbf{\score{100.0}{0.0}} & --- \\
            & Maintenance  & \textbf{\score{100.0}{0.0}} & \textbf{\score{100}{0.0}} & \textbf{\score{100.0}{0.0}} & * & --- \\
            & Briefcase    & \textbf{\score{100.0}{0.0}} & \score{95.9}{0.8} & \textbf{\score{100.0}{0.0}} & * & \textbf{\score{100.0}{0.0}}\\
            & CityCar      & \textbf{\score{100.0}{0.0} }& \textbf{\score{100.0}{0.0}}  & \score{99.9}{0.1} & * & --- \\
            & Miconic      & \textbf{\score{100.0}{0.0} }& \textbf{\score{100.0}{0.0} }& \textbf{\score{100.0}{0.0} }& * & --- \\
            & Nurikabe     & \textbf{\score{100.0}{0.0}} & \textbf{\score{100.0}{0.0}} & \textbf{\score{100.0}{0.0}}  & * & --- \\
        \midrule
        \multirow{6}{*}{\rotatebox[origin=c]{90}{App. Recall (\%)}}
            & Satellite    & \textbf{\score{100.0}{0.0}} & \score{60.0}{0.0} & \score{80.3}{2.1} & \score{80.0}{0.0} & --- \\
            & Maintenance  & \textbf{\score{100.0}{0.0}} & \textbf{\score{100.0}{0.0}} & \textbf{\score{100.0}{0.0}} & *  & ---\\
            & Briefcase    & \textbf{\score{100.0}{0.0}} & \score{97.3}{0.5} & \textbf{\score{100.0}{0.0}}  & * & \score{0.0}{0.0} \\
            & CityCar      & \score{97.6}{2.8}  & \score{0.0}{0.0} & \textbf{\score{98.7}{2.0}}  & * & --- \\
            & Miconic      & \textbf{\score{100.0}{0.0}} & \textbf{\score{100.0}{0.0}} & \textbf{\score{100.0}{0.0}} & * & --- \\
            & Nurikabe     & \textbf{\score{100.0}{0.0}} & \textbf{\score{0.0}{0.0}} & \textbf{\score{100.0}{0.0}}  & * & ---\\
        \midrule
        \multirow{6}{*}{\rotatebox[origin=c]{90}{Effect Precision (\%)}}
            & Satellite    & \textbf{\score{100.0}{0.0}} & \textbf{\score{100.0}{0.0}} & \textbf{\score{100.0}{0.0}} & \textbf{\score{100.0}{0.0}} & --- \\
            & Maintenance  & \textbf{\score{100.0}{0.0}} & \textbf{\score{100.0}}{0.0} & \textbf{\score{100.0}{0.0}} & * & --- \\
            & Briefcase    & \textbf{\score{100.0}{0.0}} & \score{88.6}{10.3} & \textbf{\score{100.0}{0.0}} & * & \textbf{\score{100.0}{0.0}}\\
            & CityCar      & \textbf{\score{100.0}{0.0}} & \textbf{\score{100.0}{0.0}}  & \textbf{\score{100.0}{0.0}} & * & --- \\
            & Miconic      & \textbf{\score{100.0}{0.0}} &  \textbf{\score{100.0}{0.0}} & \textbf{\score{100.0}{0.0}} & * & --- \\
            & Nurikabe     & \textbf{\score{100.0}{0.0}} & \textbf{\score{100.0}{0.0}} & \textbf{\score{100.0}{0.0}}  & * & --- \\
        \midrule
        \multirow{6}{*}{\rotatebox[origin=c]{90}{Effect Recall (\%)}}
            & Satellite    & \score{99.6}{0.0} & \score{60.0}{0.0} & \textbf{\score{100.0}{0.0}} & \score{79.6}{0.0} & --- \\
            & Maintenance  & \textbf{\score{100.0}{0.0}} & \score{43.0}{0.0} & \textbf{\score{100.0}{0.0}} & *  & ---\\
            & Briefcase    & \textbf{\score{100.0}{0.0}} & \score{69.3}{10.7} & \textbf{\score{100.0}{0.0}} & * & \score{0.0}{0.0} \\
            & CityCar      & \textbf{\score{99.5}{1.0}}  & \score{0.0}{0.0} & \score{0.0}{0.0}  & * & --- \\
            & Miconic      & \textbf{\score{100.0}{0.0}} & \score{66.6}{0.0} & \textbf{\score{100.0}{0.0}} & * & --- \\
            & Nurikabe     & \textbf{\score{100.0}{0.0}} & \score{0.0}{0.0} & \textbf{\score{100.0}{0.0}}  & * & ---\\
        \bottomrule
    \end{tabular}
        \caption{%
        Applicability and effect predictive power metrics scored by precision (\%) and recall (\%) across six benchmark domains. OHCAM and OLAM use online data, while the offline methods use the expert dataset. Results are reported as mean$\pm$std. deviation over five trials. A dash (--) indicates the method timed out on that domain. An asterisk (*) indicates the score could not be computed due to non-determinism.
    }
    \label{tab:predictive_power}
\end{table}

\vspace{3pt}
\noindent \emph{Applicability prediction}: Using the true positives ($\mathrm{TP}_\top$), true negatives ($\mathrm{TN}_\top$), false positives ($\mathrm{FP}_\top$), and false negatives ($\mathrm{FN}_\top$), the precision and recall for preconditions are defined as follows. 

\begin{equation*}
    P_\top(a) = \frac{\mathrm{TP}_\top}{\mathrm{TP}_\top + \mathrm{FP}_\top} ; \quad R_\top(a) = \frac{\mathrm{TP}_\top}{\mathrm{TP}_\top + \mathrm{FN}_\top},
\end{equation*}

with $P_\top(a) =1, R_\top(a)=0$ when $\mathrm{TP}_\top = \mathrm{FP}_\top=0$. 

\vspace{3pt}
\noindent \emph{Effect prediction}: 
True positives for action effects are defined over the number of ground atoms that were correctly predicted to change. True negatives, false positives, and false negatives are defined analogously. 

\begin{align*}
    \mathrm{TP}_{\mathrm{eff}}(s, a) &= |(\hat{s}' \setminus s) \cap (s' \setminus s)| \\
    \mathrm{TN}_{\mathrm{eff}}(s,a) &= |(s \cap \hat{s}' \cap s')| \\
    \mathrm{FP}_{\mathrm{eff}}(s,a) &= |(\hat{s}' \setminus s) \setminus s'| \\
    \mathrm{FN}_{\mathrm{eff}}(s,a) &= |(\hat{s}' \cap s) \setminus s'|.
\end{align*}
The effect precision and recall are defined as with applicability. 

Finally, we average the applicable and effect precisions and recalls over all states and actions.

We were unable to reliably measure the predictive power of Cluster\&Intersect with this metric because it learns multiple action branches per action without mutually exclusive preconditions. This means that at planning time the agent can choose between outcomes, inducing non-determinism, whereas the true environment has a single deterministic outcome. On \textit{Satellite}, there were no conditional effects, so it learned a deterministic model. Additionally, despite Conditional-SAM predicting 100\% of the evaluation dataset on multiple domains, it often timed out due to complex, quantified learned preconditions.

\paragraph{Number of hypothesis expanded} As we scale to harder domains, the exploding hypothesis space limits how many conditions and quantified variables can be learned, limiting the expressiveness. A good approach should enumerate as few hypotheses as possible. In Corollary 1, we establish that OHCAM expands hypotheses with respect to the depth of the true environment conditional effect, rather than instantiating all hypotheses within a fixed antecedent depth. To validate this empirically, Fig.~\ref{fig:hypotheses} reports the total number of hypotheses, across all effects, generated by OHCAM versus the latest conditional action model-learning method, Conditional-SAM. To simulate not knowing the complexity of an environment in advance, we report results for Conditional-SAM with a maximum of $d=1, 2, 3$ antecedent terms. OHCAM is trained offline with $n=1$ to find only the MAP hypothesis.

Our results show that OHCAM generates fewer hypotheses in every domain, despite not originally being given the ground truth quantified variables. This is most notable on the \textit{Nurikabe} domain, where OHCAM learned a depth-3 conditional effect and still generated fewer hypotheses than depth-1 Conditional-SAM. Conditional-SAM exploded to 637 thousand hypotheses with $d=3$, showing that full enumeration of the hypothesis space quickly becomes intractable. These results reinforce the theoretical results in Corollary 1.

\subsection{Appendix B.3: Description of Baselines}

For our baselines, we selected action model learning methods that support conditional effects, quantified effects, or online active learning. 

\begin{enumerate}
    \item LAMP~\citep{zhuoLearningComplexAction2010} learns complex action models with expressive model features using Markov Logic Networks. For fair comparison, we restrict it to only learn conditional and quantified effects. We use the recommended threshold of $0.5$. We were unable to find a publicly available implementation, so we re-implemented it as described in the paper. The prohibitively slow performance observed in our experiments may be attributed in part due to a lack of optimization, but even the learning in the original paper takes substantially longer than any other baseline.
    \item Conditional-SAM~\citep{mordochSafeLearningPDDL2024} extends safe action model learning to conditional and quantified effects.
    \item Cluster\&Intersect is the symbolic learning component of \citet{chitnisLearningNeuroSymbolicRelational2022} that partitions the dataset by observed effect, creating multiple action branches. During execution, we map these action branches back to the environment action.
    \item OLAM~\citep{lamannaOnlineLearningAction2021} is a STRIPS-only online learning method that constructs plans to reach informative states.

Except for LAMP, we use official implementations by the authors.
\end{enumerate}

\section{Appendix C: Data Generation and PDDL Domains}
\subsection{Appendix C.1: Data Generation}

Data for learning action models are generated by adapting the data generation procedure from \citet{sternEvaluatingPlanningModelLearninga} to the online setting, which we outline here.

For each domain, we create a dataset of 10 training and 10 evaluation problem instances. The evaluation instances selected are harder to solve than the training instances. We use existing generators~\citep{seipp-et-al-zenodo2022} to generate problem instances until we get 10 problems that can be solved within a 60 second budget by Fast Downward on a machine with 32GB RAM with the greedy heuristic used by \citet{sternEvaluatingPlanningModelLearninga}. We reference problem files in \citet{ai_planning_classical_domains} for guidance on selecting problem-generation parameters. 

Online learning agents interact with each training problem for 25 steps. Therefore, if the training budget is 100 steps, then the agent interacts with four problem instances. Experiments with 250 steps use all problem instances. Similarly, a random dataset is created by selecting random actions with the same budget as online agents.

Additionally, we gather an expert dataset by following plans generated with the ground truth model over the training problems. In order to gather diverse data, we interleave random applicable actions with 20\% probability, replan, and continue execution.

\begin{figure}
    \centering
    \includegraphics[width=0.97\linewidth]{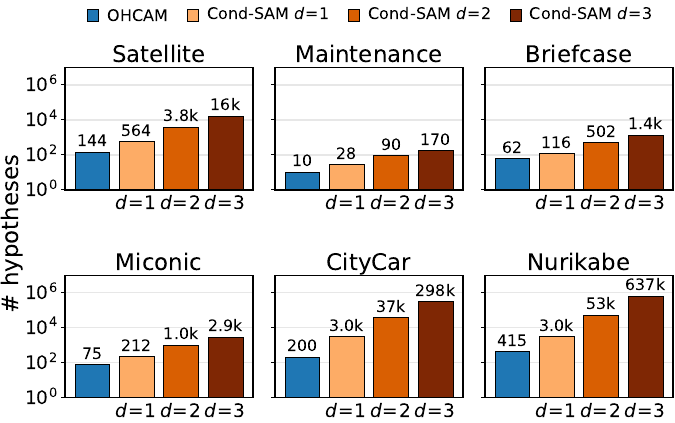}
    \caption{Total number of hypotheses generated by OHCAM's dynamic expansion compared to Conditional-SAM at fixed depth values $d=1,2,3$. y-axis is log-scale.}
    \label{fig:hypotheses}
\end{figure}

\subsection{Appendix C.2: Simulation Domains}
In order to make state-of-the-art comparisons, we use the suite of classic planning domains compiled by \citet{mordochSafeLearningPDDL2024}, which is the most recent work in learning action models with conditional effects and quantified variables. The domains include \textit{Satellite}~\citep{long20033rd}, \textit{Briefcase} and \textit{Miconic}~\citep{bacchus2001aips}, and \textit{Maintenance}, \textit{CityCar}, and \textit{Nurikabe}~\citep{vallati20152014}. These domains range in number of actions, predicates, and conditional and quantified effects. We summarize the features of the domains in Table~\ref{tab:domains}. 

\begin{table*}[!ht]
    \centering
    \begin{tabular}{|l|l|l|l|l|l|l|l|l|l|}
    \hline
        ~ & $|A|$ & $\max |\mathcal{L}_a|$ & $\max |\mathcal{L}_a^Q|$ & $\max |C_\ell|$ & Total CE & Total UQV & Max UQV \\ \hline
        Satellite & 5 & 8 & 8 & 0* & 0* & 0 & 0 \\ \hline
        Maintenance & 1 & 1 & 3 & 1 & 1 & 1 & 1 \\ \hline
        Briefcase & 3 & 3 & 5 & 1 & 1 & 1 & 1 \\ \hline
        Miconic & 3 & 4 & 5 & 2 & 3 & 3 & 1 \\ \hline
        CityCar & 7 & 14 & 14 & 1 & 2 & 2 & 1 \\ \hline
        Nurikabe & 4 & 21 & 31 & 3 & 3 & 3 & 1 \\ \hline
    \end{tabular}
    \caption{Benchmark domain conditional effect and quantified variable features: \# actions $|A|$, maximum \# action-bound literals $\max |L_a|$, \# literals with optimal quantified variables $\max |L_a^Q|$, maximum \# antecedent terms $\max |C_\ell|$, total conditional effects CE, total UQVs, and maximum \# UQVs in one effect, and number of disjunctive effects. \*\textit{Satellite} conditions are redundant.}
    \label{tab:domains}
\end{table*}
	
\textit{Satellite} models a collection of satellites that must rotate towards targets and allocate power to on-board instruments to take images. All conditions are of the form \texttt{(when (not (x)) (x))}, which is behaviorally identical to an unconditional effect. Thus, this domain supports comparisons to STRIPS-only learning methods (e.g. OLAM).

\textit{Briefcase} models the transportation of objects by putting them in a briefcase. Moving involves universally quantifying over all objects in the briefcase to place them in the location specified in the action parameters. This domain has a maximum of three action parameter-bound literals, making it simple for learning conditions.

\textit{Miconic} defines actions for an elevator that must move between floors and allow passengers to board. The \textsc{stop} action involves multiple quantified effects, involving conditions such as only deboarding passengers when they reach their destination.

\textit{Maintenance} is the smallest domain, featuring only a single action with only three possible condition literals. The domain involves scheduling when and where mechanics should work. A quantified variable selects all airplanes at that location on that day to be repaired.

\textit{CityCar} is a domain where the agent works as a city planner managing road infrastructure and traffic. The domain is more complex, with seven actions, 14 possible conjunctive terms, and two quantified effects. This domain turned out to be the most challenging for learners because of a quantified effect that only happens when a road is destroyed while a car is on it, which is a rare interaction to encounter. Additionally, in our testing we observed that overly complex preconditions affect task completion rate: there were some tasks that OHCAM, which learns minimal precondition sets, was able to solve, where the same model with a safe overapproximation timed out.

\textit{Nurikabe} is a simplified planning version of a tile puzzle of the same name. This domain has the largest predicate space, with a maximum of 31 possible conjunctive terms. The maximum condition involves three literals, which \citet{mordochSafeLearningPDDL2024} highlights as a significant scaling challenge.

All domains are available in \citet{ai_planning_classical_domains}, except the conditional version of \textit{Satellite} is found at \citet{potassco_satellite_instance}. As in \citet{mordochSafeLearningPDDL2024}, we ignore action costs. 

\subsection{Appendix C.3: Robot Hardware Domains}
We evaluate OHCAM and the baselines on two domains on real robot hardware. We detail the specifics of these domains and their conditional effects below. The na{\"\i}ve action models we use and the learned action models can all be found in the models folder of the supplemental materials. 

\paragraph{MagnetBlocks}
The magnet blocks domain is similar to the \textit{BlocksWorld} classical planning domain in which the agent is tasked with achieving block stack configurations with varying number of blocks. In this domain, we have four classes of objects: (i) a table, (ii) wooden blocks, (iii) magnetic blocks, and (iv) locations. Magnet blocks will stick to other magnetic blocks, resulting in both blocks moving as the result of a pick action. Locations denote the workspace of available locations that a block can reside in. 

There are two actions, \textsc{pick} and \textsc{place}. Our \textsc{pick} and \textsc{place} actions are parameterized by the target object to act on, the object to pick the object from or place it on, and the corresponding locations of both of those objects. The parameters are sufficient to perform standard block-stacking tasks, but magnetic interactions between blocks necessitate quantification.

A na{\"\i}ve action model is unaware of those interactions. A model learning algorithm must learn from the data to understand that when it picks up a magnetic block that sits upon another magnetic block, both blocks will be lifted. Likewise, when placing down a stack, the bottommost block is going to be placed at the target location instead of the grasped block. Learning agents must be able to model the quantified relationships between the blocks and locations.

The magnet domain features an additional complexity: \emph{disjunctive conditions}. The agent must reason about whether either block is non-magnetic, which involves a disjunctive condition. We extend OHCAM to handle disjunctions with a \emph{greedy set cover} approach: it selects conjunctive hypotheses with zero false positives until every observed effect in the dataset is explained by a true positive from at least one selected hypothesis. This produces a condition in disjunctive normal form. This approach is sufficient to learn a simple disjunction, but could be extended in the future to support more complex case-by-case reasoning.

We show an excerpt of the model learned online in the MagneticBlocks robot hardware environment by OHCAM in Listing~\ref{lst:magnet}.

\begin{listing}[t]
\begin{lstlisting}[language=PDDL]
(:action pick-block
:parameters ( ?target ?support 
    ?targetloc ?supportloc)
:precondition (and
    (on ?target ?support)
    ...
)
:effect (and
    ; regular blocks world
    (grasping ?target) (gripper-full)
    (not (at-loc ?target ?targetloc))
    ; conditional effect when both magnetic
    (when (and (plastic ?target) 
               (plastic ?support))
        (not (at-loc ?support ?supportloc)))
    ; disjunction: one of them is not magnetic
    (when (or (not (plastic ?target)) 
              (not (plastic ?support)))
        (not (on ?target ?support)))
    ; UQV for the block below the support
    (forall (?uqv0 - block)
        (when (and (plastic ?target) 
               (not (wooden ?support)))
            (not (on ?support ?uqv0))))
    ; UQV for the location below the support when magnetic
    (forall (?uqv0 - location)
        (when (and 
                (loc-above ?supportloc 
                    ?uqv0) 
                (plastic ?support) 
                (plastic ?target))
            (not (obstructed-above ?uqv0))))
    ...
\end{lstlisting}
\caption{Abbreviated and annotated excerpt of learned MagnetBlocks \textsc{pick} action with conditional and quantified effects.}
\label{lst:magnet}
\end{listing}

\paragraph{SpongeWorld}
The sponge world domain simulates a household cleaning task in which the primary objective is to clean a dirty plate. In this domain, we have a variety of objects: a table, targets or `coasters' that objects rest on, sponges, cans, messes, and water spills. A blue sponge is always soaked, and wiping a plate with the wet sponge will result in excess water being on the plate. Furthermore, wiping the plate when the can is on the plate will result in the can being knocked on to the table. 

\begin{listing}[t]
\begin{lstlisting}[language=PDDL]
(:action wipe
    :parameters (?grasped ?mess ?target)
    :precondition (and
        (grasping ?grasped)
        (on ?mess ?target)
        ...
    )
    :effect (and
        ; cleans original mess
        (not (dirty ?target))
        (not (on ?mess ?target))
        ; adds spill when sponge is wet
        (forall (?uqv0 - spill)
            (when (wet ?grasped)
                (on ?uqv0 ?target)))
        ; knocks cans off plate
        (forall (?uqv0 - can)
            (when (on ?uqv0 ?target)
              (not (on ?uqv0 ?target))))
        ; knocks cans onto table
        (forall (?uqv0 - table ?uqv1 - can)
            (when
                (and (on ?uqv1 ?target))
                (on ?uqv1 ?uqv0)))))
\end{lstlisting}
\caption{Annotated excerpt of learned SpongeWorld \textsc{wipe} action with conditional and quantified effects.}
\label{lst:sponge}
\end{listing}

In addition to the \textsc{pick} and \textsc{place} actions, we add a third action \textsc{wipe} with three predicates: the mess to wipe, the object with which to wipe the mess, and the surface that the mess sits on.

A na{\"\i}ve action model is unaware that wiping with a wet sponge will leave the water spill on the plate and that wiping the plate with a can on the plate will move the can to the table. Both of these are examples of effects that might be difficult for an expert to foresee or predict ahead of time, or may be accidentally omitted because they are seen as `common knowledge'. An online learning agent can increase task performance by inferring such effects from executed actions. We show the model learned by OHCAM in Listing~\ref{lst:sponge}.

\begin{figure}[t]
    \centering
    \includegraphics[width=\linewidth]{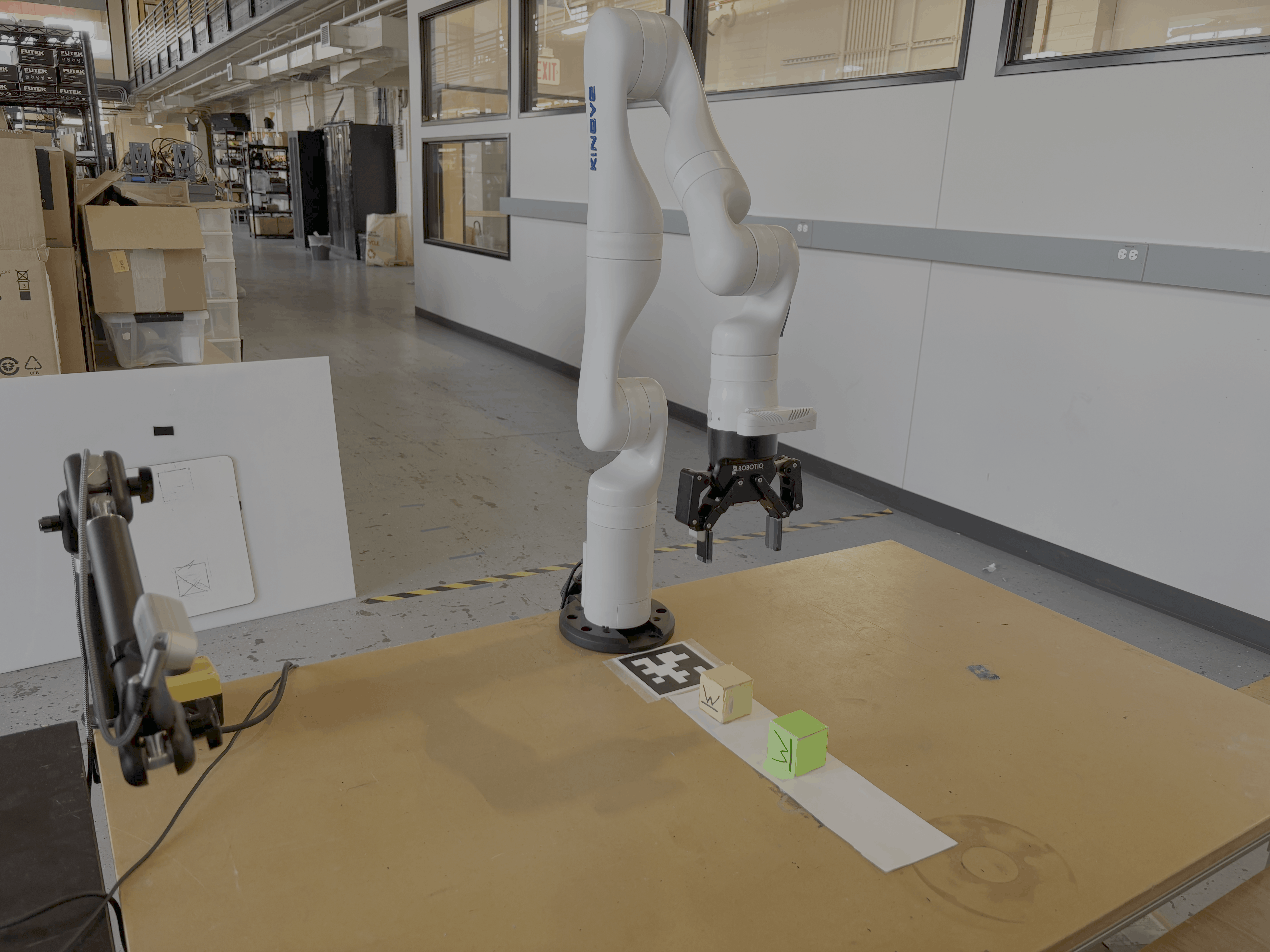}
    \caption{A picture of of tabletop manipulation setup picturing the Intel RealSense D435i camera (left) and the Kinova Gen3 Arm with Robotiq 2F85 gripper (center) and AprilTag for camera localization.}
    \label{fig:tabletop_domain}
\end{figure}

\section{Appendix D: Real World Hardware Setup and Evaluation}
This section outlines the components of our robot system that enable model learning and evaluation. Our real world experiments were conducted on a Kinova Gen3 7 DoF manipulator arm with a Robotiq 2F85 two finger gripper, using an Intel RealSense D435i mounted on the side of the workspace for object localization. Figure~\ref{fig:tabletop_domain} shows our setup.

\subsection{Appendix D.1: Hardware Setup}
\paragraph{Perception and Object Localization}
Perception of the objects on the table top is performed using the Intel Realsense D435i camera which we interface with through the realsensepy package, accessing both the RGB and depth streams at 30 frames per second. AM3~\citep{carion2026} segmentation model is used to identify the objects in the scene. To support robust object tracking via language concepts, we assume that there is only one instance of each object per concept in the scene.

After identifying each concept in the RGB image, we use the segmentation masks returned from SAM3 along with the calibrated D435i camera intrinsics to reconstruct the point cloud for each object in the scene. We make one of two assumptions to compute the centroid of the object from the point cloud: (i) the majority of the points lie on the front face which applies to cubes, cans and sponges, or (ii) the majority of the points lie on the top face which applies to plates, cans, water spills, and messes. To obtain the poses of the object in the robot frame, we place a 100mm AprilTag at the base of the robot with an offset along the x-axis of 14cm and use the AprilTags package to compute the camera pose and transform object centroids into the robot frame for manipulation. 

By computing the centroids in this manner, we have made the following assumptions: (i) the orientation of each object is static and (ii) the ideal grasp position can be parameterized solely by the centroid. Given that the aim of our robot system is not to demonstrate complex grasping or perception behavior, we believe that these assumptions are not limiting, and instead serve to demonstrate the performance of our online model learning algorithm in the real world. 

\paragraph{Predicate Grounding from Perception}
The majority of the predicates in our PDDL evaluation domains denote spatial relations among objects. 
We perform predicate grounding programatically by comparing the centroids of the objects. As an example, for the `on' predicate, if the XY center of object A is within the XY bounding box of object B (plus a little slack) and the Z center of object A rests above the Z center of object B plus the size of object A (plus a little slack), then we say that object A is `on' object B.

In this manner, we are able to ground all the predicates in evaluation domains programatically using the centroids computed with SAM3. On an NVIDIA 5070 GPU with 12GB of VRAM, we are able to run the SAM3 segmentation model at 5Hz. Given the execution time of our robot skills, this inference speed was sufficient to ensure that all predicates were accurately grounded during action selection. 

\paragraph{Parameterized Manipulation Skills}
In both of our domains, the model has parameterized actions such as `\textsc{pick}(object A, object B)', interpreted as `Pick object A from object B'. However, these actions from the action plan are agnostic to the position of the object in the real world. With our perception system, we track each object in the real world at 5Hz. Then, when an action is called, we resolve the object names to their centroids and use these centroids to parameterize the \textsc{pick}, \textsc{place} and \textsc{wipe} skills that are present in our two real world domains. 

All of our robot manipulation skills are stateful: we combine a series of motions together to achieve the desired result. These motions are open loop once the skill is parameterized and initiated. To communicate with the Kinova Gen3 robot, we used the Kinova Kortex API Python bindings which enable control at 20Hz over TCP protocol. The Kinova Kortex API exposes a cartesian end effector controller, enabling linear motion between poses of the end effector. The inverse kinematics are resolved on the robot hardware, which in turn controls the necessary joint torques to achieve the desired pose subject to position and orientation velocity limits.

\paragraph{Noise in Perception and Execution}
When an object is misidentified or unidentified by SAM3, the predicate grounding may be erroneous. This can also occur due to partial occlusion of the gripper, although we designed our environment to result in minimal occlusion. The majority of this noise did not impact the successful execution of skills, but was noise that our learning algorithms had to deal with when differentiating between noise and observed conditional effects. 

Execution noise was rarely observed within the system. Infrequently, a sponge would slip out of the gripper if it was too wet, resulting in the pick or place actions to fail prematurely. Most execution failures could be traced to perception failures, either resulting from a noisy depth map and incorrect centroid, or a misidentified object. Overall, we observed around 5\% action failures during data collection resulting from perception noise. This noise represents realistic conditions and challenges in the real world that these learning algorithms must overcome. 

\subsection{Appendix D.2: Robot Data Collection and Evaluation}

\paragraph{Data Collection:} Using our robot system described in \textit{Appendix D.1}, we collected data in the \textit{MagnetBlocks} and \textit{SpongeWorld} domains (see Figure~\ref{fig:real_world_data_collection}). Given that the robot can observe irreversible conditional effects which can result in potentially unsafe actions to the robot hardware (placing a block on another block with a magnet block in between results in excessive torque applied to the joints), all of our hardware experiments are conducted with a human supervisor. 

In this setup, the importance of online active learning becomes very clear: data collection is time consuming and so an agent must take actions that maximally reduce uncertainty about the presence of conditional effects within the environment. We collect 250 skill executions in the \textit{MagnetBlocks} and \textit{SpongeWorld} domains. Data collection takes around 45 minutes to 1.25 hours, primarily due to skill execution and environment resetting rather than action selection. We observed 5\% noise present during data collection, resulting in execution failures or perception failures that were outside the action preconditions. 

\begin{figure}[t]
    \centering

    \begin{subfigure}{0.495\linewidth}
        \centering
        \includegraphics[width=\linewidth]{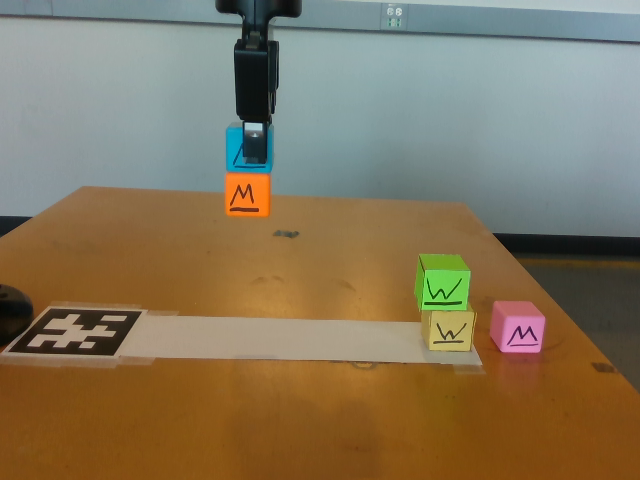}
    \end{subfigure}
    \hfill
    \begin{subfigure}{0.495\linewidth}
        \centering
        \includegraphics[width=\linewidth]{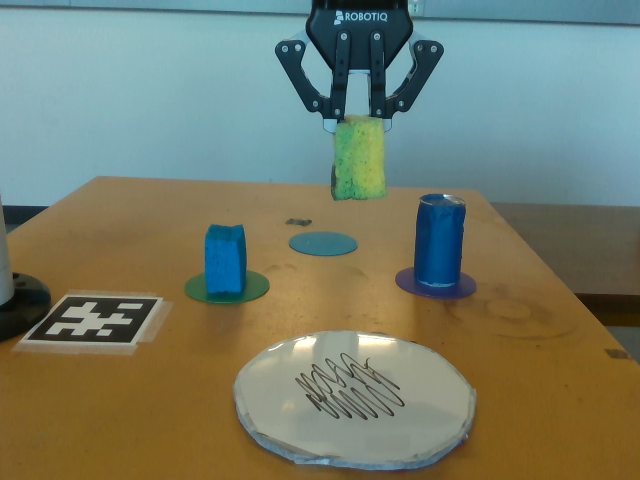}
    \end{subfigure}

    \caption{Data collection in the MagnetBlocks world (left) and the SpongeWorld (right) domains viewed through the Intel RealSense D435i camera which is used for perception. Both cases are showcasing an unmodeled conditional effect. }
    \label{fig:real_world_data_collection}
\end{figure}

\paragraph{Robot Evaluation:}
Using data collected by OHCAM, each of the baselines and OHCAM learned a PDDL model to asses their ability to learn conditional effects from noisy, real-world data. With these learned models, we then evaluated each model on a suite of ten tasks in \textit{MagnetBlocks}, seven of which require knowledge of conditional effects to solve, and five tasks in \textit{SpongeWorld}, four of which require knowledge of conditional effects to solve. We use the FastDownward planner to plan over these learned models and then execute the plan on robot hardware.

We run each task once per learned model. If a task fails due to perception or execution noise, we rerun the plan, otherwise we report the success of failure of the plan execution. We take this approach to evaluate OHCAM's ability to learn in the presence of noisy observations and sample efficiently on real world hardware; not to evaluate the accuracy of our robot system's execution. Nonetheless, during data collection across both domains, we only had to intervene and rerun twice for a skill execution failure, indicating that our robot system is relatively reliable. The learned models and hardware video can be found in the supplemental material folder.

\end{document}